\newtheorem{proposition}{Proposition}
\newtheorem{lemma}[proposition]{Lemma}
\def\Secref#1{Section~\ref{#1}}
\def\1{\bm{1}}
\def\rvf{{\mathbf{f}}}
\def\vmu{{\bm{\mu}}}
\def\vtheta{{\bm{\theta}}}
\def\vphi{{\bm{\phi}}}
\def\ve{{\bm{e}}}
\def\vm{{\bm{m}}}
\def\vw{{\bm{w}}}
\def\vx{{\bm{x}}}
\def\vy{{\bm{y}}}
\def\vz{{\bm{z}}}
\def\mA{{\bm{A}}}
\def\mB{{\bm{B}}}
\def\mC{{\bm{C}}}
\def\mH{{\bm{H}}}
\def\mU{{\bm{U}}}
\def\mV{{\bm{V}}}
\def\mW{{\bm{W}}}
\def\mX{{\bm{X}}}
\def\mY{{\bm{Y}}}
\def\mSigma{{\bm{\Sigma}}}
\DeclareMathAlphabet{\mathsfit}{\encodingdefault}{\sfdefault}{m}{sl}
\SetMathAlphabet{\mathsfit}{bold}{\encodingdefault}{\sfdefault}{bx}{n}
\newcommand{\R}{\mathbb{R}}
\newcommand{\sigmoid}{\sigma}
\DeclareMathOperator*{\argmax}{arg\,max}
\definecolor{mydarkblue}{rgb}{0,0.08,0.45}
\pgfplotsset{compat=newest}
\renewcommand{\todo}[2][]{\tikzexternaldisable\@todo[#1]{#2}\tikzexternalenable}
\crefname{lemma}{lemma}{lemmas}
\Crefname{lemma}{Lemma}{Lemmas}
\crefname{theorem}{theorem}{theorems}
\Crefname{theorem}{Theorem}{Theorems}
\crefname{proposition}{proposition}{propositions}
\Crefname{proposition}{Proposition}{Propositions}
\title{Mixtures of Laplace Approximations \\ for Improved \textit{Post-Hoc} Uncertainty in Deep Learning}
\author[t]{Runa Eschenhagen\thanks{Correspondence to: \texttt{runa.eschenhagen@student.uni-tuebingen.de}.}$\;^{,}$}
\author[c,m]{Erik Daxberger}
\author[t,m]{Philipp Hennig}
\author[t]{Agustinus Kristiadi}
\affil[t]{University of T\"{u}bingen}
\affil[c]{University of Cambridge}
\affil[m]{MPI for Intelligent Systems, T\"{u}bingen}
\begin{document}

\maketitle
\setcounter{footnote}{0}

\begin{abstract}
Deep neural networks are prone to overconfident predictions on outliers.
Bayesian neural networks and deep ensembles have both been shown to mitigate this problem to some extent.
In this work, we aim to combine the benefits of the two approaches by proposing to predict with a Gaussian mixture model posterior that consists of a weighted sum of Laplace approximations of independently trained deep neural networks.
The method can be used \textit{post hoc} with any set of pre-trained networks and only requires a small computational and memory overhead compared to regular ensembles.
We theoretically validate that our approach mitigates overconfidence ``far away'' from the training data and empirically compare against state-of-the-art baselines on standard uncertainty quantification benchmarks.
\end{abstract}

\section{Introduction}
\label{short_intro}

While deep neural networks (DNNs) have achieved impressive results in a wide range of domains, they are prone to make overconfident predictions on outliers, such as \textit{out-of-distribution} (OOD) data and data under \textit{distribution shift} \citep{datasetshit2009, Nguyen2015DeepNN}; this is especially harmful in safety-critical applications \citep{amodei2016concrete}.

Bayesian inference of the DNN weights, resulting in a class of methods called \emph{Bayesian neural networks} (BNNs), is a principled approach for quantifying predictive uncertainty. 
Commonly, a Gaussian distribution is used to approximate the true posterior. Many methods have been proposed to infer the parameters of this distribution, such as Laplace approximations \citep{mackay1992practical, Ritter2018ASL}, variational inference (VI) \citep{Graves2011PracticalVI, blundell2015weight}, and sampling-based approaches \citep{maddox2019simple,zhang2019cyclical}.
While BNNs have been demonstrated to scale to large networks and problems like ImageNet \citep{osawa2019practical, maddox2019simple}, recent work \cite{Ovadia2019CanYT, Ashukha2020PitfallsOI} has shown that they tend to be outperformed by a simpler method called Deep Ensemble \citep{Lakshminarayanan2016SimpleAS}, which averages the predictions of multiple identical networks, independently trained with different random initializations.

In this work, we combine---intuitively speaking---the \textit{global} uncertainty of Deep Ensembles with the \textit{local} uncertainty around a single mode of BNNs with Gaussian approximate posterior, in a \emph{post-hoc} way (\Cref{fig:one}); this idea has been proposed previously \cite{Wilson2020BayesianDL}.
We apply Laplace approximations to multiple independent pre-trained DNNs; specifically, we focus on last-layer Laplace approximations \cite{SnoekRSKSSPPA15, Kristiadi2020BeingBE}: this makes our method fast (see \Cref{tab:costs}), scalable, and easy to implement without sacrificing predictive and uncertainty quantification performance. We call the resulting method \emph{Mixtures of Laplace Approximations} (MoLA). 
It is trivial to implement in PyTorch, using the recently published \texttt{\textcolor{mydarkblue}{laplace}} library\footnote{Available at \url{https://github.com/AlexImmer/Laplace}.} \cite{laplace}.
Besides empirically studying our method, we also extend the analysis of \citet{Kristiadi2020BeingBE} on single-Gaussian posteriors to the multi-class classification setting and subsequently show that MoG-based posteriors also avoid overconfidence ``far away'' from the training data.

\begin{figure}
    \centering

    \includegraphics[width=\textwidth]{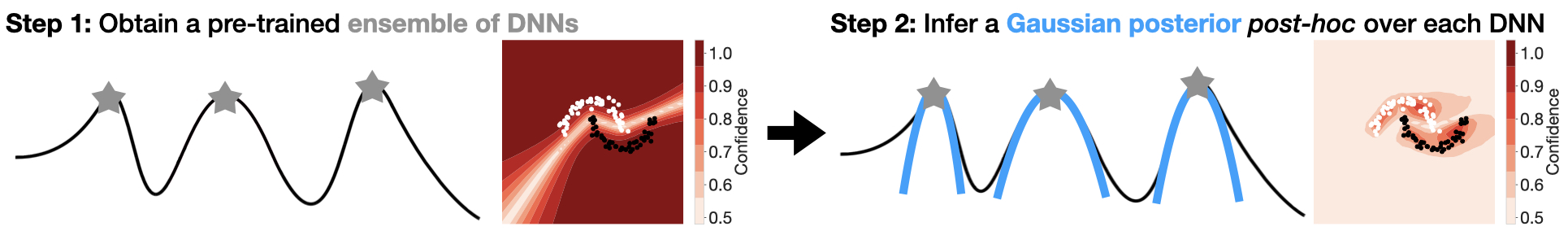}

    \caption{A Deep Ensemble captures \textit{global} uncertainty by using only a small number of point estimates at modes (represented by stars) of the posterior, resulting in overconfident predictions away from the training data. Here we construct \emph{post-hoc} \emph{local} uncertainty (blue curves) around each mode, resulting in improved predictive uncertainty. Figure adapted from \citet{Fort2019DeepEA}.}
    \label{fig:one}
\end{figure}

\section{Background}
\label{sec:short_background}

We focus on the multi-class classification setting of a dataset $\mathcal{D} = \{\vx_i, \vy_i\}_{i=1}^N$ in $C$ classes, with i.i.d.~data points $\vx_i \in \mathbb{R}^D$ and the corresponding one-hot encoded labels $\vy_i \in \{\ve_1, ..., \ve_C\},$ where $\ve_i$ is the $i$-th $C-$dimensional standard unit vector.
Consider a neural network $\rvf_{\vtheta}: \mathbb{R}^D \rightarrow \mathbb{R}^C$ parameterized by $\vtheta \in \mathbb{R}^M$ with posterior distribution $p(\vtheta|\mathcal{D}) \propto p(\mathcal{D}|\vtheta) p(\vtheta).$
A common way to train such a network is via \textit{maximum a posteriori} (MAP) inference, i.e.\ we find the MAP estimate $\vtheta_{\text{MAP}} = \argmax_{\vtheta} p(\vtheta|\mathcal{D}) = \argmax_{\vtheta} \log p(\mathcal{D}|\vtheta) + \log p(\vtheta).$
The log-likelihood $\log p(\mathcal{D}|\vtheta)$ is equivalent to the negative loss function, e.g. the cross-entropy loss for classification, and the prior distribution $p(\vtheta)$ over model parameters $\vtheta$ is usually a simple isotropic Gaussian $\mathcal{N}(\vtheta|\mathbf{0}, \lambda^{-1}\mathbf{I})$ with prior precision $\lambda \in \mathbb{R_{+}}$, which is closely related to $L_2$ regularization and weight decay. 

\subsection{(Last-Layer) Laplace Approximation}

A standard way to define the approximate posterior $q(\vtheta | \mathcal{D})$ is via a simple Gaussian approximation $\mathcal{N}(\vtheta|\vmu, \mSigma)$. One option is to employ the Laplace approximation \cite{mackay1992practical}.
In this technique, we apply a second-order Taylor expansion at $\vtheta_{\text{MAP}}$ to the log joint distribution, and then choose $\vmu = \vtheta_{\text{MAP}}$ and $\mSigma = (\mH_\vtheta + \lambda \mathbf{I})^{-1} \in \mathbb{R}^{M \times M},$ where $\mH_\vtheta$ is the Hessian of $-\log p(\mathcal{D}|\vtheta)$ w.r.t. $\vtheta$ evaluated at $\vtheta_{\text{MAP}}$.

It has been shown that last-layer Bayesian approximations yield a competitive performance compared to all-layer ones \citep{SnoekRSKSSPPA15,ober2019benchmarking,brosse2020last}. In this case, we consider the network $\rvf_\vtheta$ as a linear function in the weight matrix $\mW \in \mathbb{R}^{C \times P}$ of the last layer, i.e. $\rvf_{\mW}(\vx_*) = \mW \vphi(\vx_*),$ with \emph{fixed} features $\vphi(\vx_*) =: \vphi_* \in \mathbb{R}^P$, which are simply the output of the penultimate layer of the neural network given an arbitrary input $\vx_*$. 
Here, we only need to do a Laplace approximation on the weight matrix $\mW$: we obtain a Gaussian approximation $\mathcal{N}(\mathrm{vec}(\mW) | \vmu, \mSigma)$ with $\vmu = \text{vec}(\mW_{\text{MAP}}) \in \mathbb{R}^{CP}$ and $\mSigma = (\mH_\mW + \lambda \mathbf{I})^{-1} \in \mathbb{R}^{CP \times CP},$ where $\mH_\mW$ is the Hessian of $-\log p(\mathcal{D}|\mW)$ w.r.t. $\text{vec}(\mW)$ evaluated at $\text{vec}(\mW_{\text{MAP}})$. This method is called \emph{last-layer Laplace approximation} \citep[LLLA,][]{Kristiadi2020BeingBE}. Since $\rvf_{\mW}$ is linear in $\mW,$ we also have a Gaussian distribution $p(\rvf_* | \vx_*, \mathcal{D}) = \mathcal{N}(\rvf_* |\vm_*, \mC_*)$ over the network outputs $\rvf_* := \rvf(\vx_*)$, where $\vm_* = \mW_{\text{MAP}} \vphi_* \in \mathbb{R}^C$ and $\mC_* = (\vphi_*^\top \otimes \mathbf{I}) \mSigma (\vphi_* \otimes \mathbf{I}) \in \mathbb{R}^{C \times C}$.\footnote{$\otimes$ denotes the Kronecker product.}

\section{Mixtures of Laplace Approximations}

Consider DE's MAP estimates $(\vtheta_\text{MAP}^{(k)})_{k=1}^K$ of an $L$-layer network. For each $k = 1 \dots, K$, we treat the first $L-1$ layers as a fixed feature map $\vphi^{(k)}$ and construct a Gaussian approximate posterior $\mathcal{N}(\mathrm{vec}(\mW) | \vmu^{(k)}, \mSigma^{(k)})$ over the last-layer weights via LLLA. That is, we set $\vmu^{(k)} = \mathrm{vec}(\mW^{(k)}_\text{MAP})$ and $\mSigma^{(k)}$ to be the inverse Hessian of the negative log-posterior w.r.t. $\mathrm{vec}(\mW)$ at $\vmu^{(k)}$. Given a sequence $\boldsymbol{\pi} := (\pi^{(k)})_{k=1}^K$ of non-negative real numbers with $\sum_{k=1}^K \pi^{(k)} = 1$, we define the approximate posterior as\footnote{The choice of $\boldsymbol{\pi}$ shall be discussed in \Cref{subsec:practical}.} $p_\text{MoLA}(\mathrm{vec}(\mW) | \mathcal{D}) := \sum_{k=1}^K \pi^{(k)} \, \mathcal{N}(\mathrm{vec}(\mW) | \vmu^{(k)}, \mSigma^{(k)}).$
By employing the \emph{multi-class probit approximation} \citep[MPA,][]{Gibbs98, lu2021meanfield}, MoLA's predictive distribution takes a particularly simple form:
\begin{equation} \label{eq:mola_predictive_main}
    p_\text{MoLA}(\vy = \ve_c | \vx_*, \mathcal{D}) \approx \sum_{k=1}^K \pi^{(k)} \, \sigma(\vz_*^{(k)})_c ,
    \end{equation}
where $\vz_*^{(k)}$ is a vector with $i$-th component $\vz_{*i}^{(k)} = \vm_{*i}^{(k)} / \sqrt{1 + (\pi/8) \mC_{*ii}^{(k)}}$ for each $k = 1, \dots, K$ (see \Cref{sec:proofs} for the derivation), and $\sigma(\cdot)$ is the softmax function. The MPA enables fast predictions with a single forward pass per component. While we focus on MoLA applied to a regular DE, recent work of \citet{Havasi2021mimo} allows us to apply MoLA to a \emph{single} DNN (MIMO-MoLA). This further improves the efficiency of MoLA without sacrificing much performance (see \Cref{fig:main_exp}). For more details on practical considerations when applying MoLA, please refer to \Cref{subsec:practical}. 

In \Cref{subsec:analysis} we show how MoLA can mitigate overconfident predictions of DNNs with ReLU nonlinearities  ``far away'' from the training data, in the sense that a training input $\vx$ is scaled with a scalar $\delta > 0$, and as $\delta \rightarrow \infty$ \cite{Hein_2019_CVPR}; all proofs are in \Cref{sec:proofs}.

\section{Experiments}
\label{sec:experiments}

\begin{table}[t]
    \caption{In-distribution: CIFAR-10 $\rightarrow$ OOD. Values are means along with their standard errors
    over five runs with models trained with different random initalizations.
    }
    \label{tab:cifar10_ood_main}

    \centering
    \fontsize{8}{10}\selectfont

    \begin{sc}
        \begin{tabular}{lccccccc}
            \toprule
            & In-Dist. & \multicolumn{2}{c}{SVHN} & \multicolumn{2}{c}{LSUN} & \multicolumn{2}{c}{CIFAR-100} \\
            Method & MMC & MMC $\downarrow$ & AUROC $\uparrow$ & MMC $\downarrow$ & AUROC $\uparrow$ & MMC $\downarrow$ &  AUROC $\uparrow$ \\
            \midrule
            MAP  &  97.2 $\pm$ 0.0 &  77.5 $\pm$ 2.9 &  91.7 $\pm$ 1.2 &  71.7 $\pm$ 0.8 &  94.3 $\pm$ 0.3 &  79.2 $\pm$ 0.1 &  90.0 $\pm$ 0.1 \\
            DE &  96.1 $\pm$ 0.0 &  62.8 $\pm$ 0.7 &  95.4 $\pm$ 0.2 &  59.2 $\pm$ 0.5 &  96.0 $\pm$ 0.1 &  70.7 $\pm$ 0.1 &  91.3 $\pm$ 0.1 \\
            SWAG & 95.1 $\pm$ 0.4 & 69.3 $\pm$ 4.0 & 91.6 $\pm$ 1.3 &  62.2 $\pm$ 2.3 & 94.0 $\pm$ 0.7 & 73.0 $\pm$ 0.4 & 88.2 $\pm$ 0.5 \\
            MSWAG  &  94.5 $\pm$ 0.2 &  57.0 $\pm$ 1.2 &  95.6 $\pm$ 0.5 &  56.3 $\pm$ 1.0 &  95.6 $\pm$ 0.3 &  65.5 $\pm$ 0.5 &  91.1 $\pm$ 0.1 \\
            LLLA &  94.1 $\pm$ 0.2 &  60.5 $\pm$ 4.0 &  93.6 $\pm$ 1.1 &  54.5 $\pm$ 1.5 &  95.4 $\pm$ 0.3 &  64.8 $\pm$ 0.6 &  90.8 $\pm$ 0.1 \\
            \midrule
            MoLA & 93.8 $\pm$ 0.1 &  \textbf{52.3 $\pm$ 0.7} &  \textbf{96.2 $\pm$ 0.2} &  \textbf{48.3 $\pm$ 0.5} &  \textbf{96.9 $\pm$ 0.1} &  \textbf{61.2 $\pm$ 0.2} &  \textbf{92.0 $\pm$ 0.0} \\
            \bottomrule
        \end{tabular}
    \end{sc}
\end{table}

\begin{figure*}[t]
    \centering

    \input{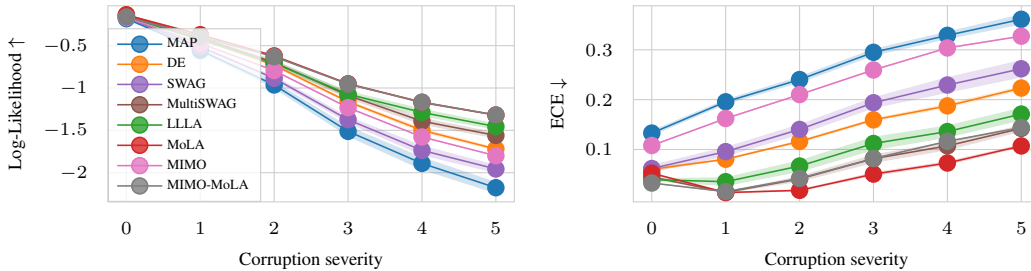}

    \vspace{-1em}
    \caption{All methods on corrupted CIFAR-10, in terms of the log-likelihood (\textbf{left}, higher is better) and expected calibration error (\textbf{right}, lower is better) metrics. Dots represent means while shades represent standard errors over five runs with models trained with different random initalizations.}%
    \label{fig:main_exp}
\end{figure*}

We conduct extensive experiments on multiple image classification benchmarks, such as rotated MNIST, corrupted CIFAR-10 (\Cref{fig:main_exp}), and OOD detection tasks with MNIST and CIFAR-10 (\Cref{tab:cifar10_ood_main}) as in-distribution datasets. Moreover, we consider corrupted ImageNet to demonstrate the scalability of our approach. Generally, MoLA matches or outperforms all other considered methods, including Multi-SWAG, despite being cheaper. See \Cref{sec:extended_experiments} for the results and a detailed discussion.

\section{Conclusion}

We propose to combine two complementary forms of approximate Bayesian inference to infer and predict with a mixture of Gaussians constructed from \textit{post-hoc} Laplace approximations. Empirically, the method compares favorably against state-of-the-art baselines on image classification benchmarks, in terms of performance as well as cost. The method can also be used with a single-model thanks to MIMO \cite{Havasi2021mimo}, which is especially attractive in the case where no pre-trained ensemble is available.

\clearpage

\begin{ack}
    We thank Alexander Immer for helpful discussions on using the marginal likelihood for model weighting.
    
    R.E., P.H., and A.K. gratefully acknowledge financial support by the European Research Council through ERC StG Action 757275 / PANAMA; the DFG Cluster of Excellence “Machine Learning - New Perspectives for Science”, EXC 2064/1, project number 390727645; the German Federal Ministry of Education and Research (BMBF) through the Tübingen AI Center (FKZ: 01IS18039A); and funds from the Ministry of Science, Research and Arts of the State of Baden-Württemberg.
    E.D. acknowledges funding from the EPSRC and Qualcomm.
    A.K. is grateful to the International Max Planck Research School for Intelligent Systems (IMPRS-IS) for support.
\end{ack}

{
\small
\bibliography{refs}
\bibliographystyle{abbrvnat}
}

\clearpage

\clearpage

\appendix

\appendix

\section{Background}
\label{sec:background}

\subsection{Bayesian Deep Learning}
\label{subsec:BDNN}

We focus on the multi-class classification setting of a dataset $\mathcal{D} = \{\vx_i, \vy_i\}_{i=1}^N$ in $C$ classes, with i.i.d.~data points $\vx_i \in \mathbb{R}^D$ and the corresponding one-hot encoded labels $\vy_i \in \{\ve_1, ..., \ve_C\},$ where $\ve_i$ is the $i$-th $C-$dimensional standard unit vector.
Consider a neural network $\rvf_{\vtheta}: \mathbb{R}^D \rightarrow \mathbb{R}^C$ parameterized by $\vtheta \in \mathbb{R}^M$ with posterior distribution $p(\vtheta|\mathcal{D}) \propto p(\mathcal{D}|\vtheta) p(\vtheta).$
A common way to train such a network is via \textit{maximum a posteriori} (MAP) inference, i.e.\ we find the MAP estimate
\begin{equation} \label{eq:map_estimation}
    \vtheta_{\text{MAP}} = \argmax_{\vtheta} p(\vtheta|\mathcal{D})
        = \argmax_{\vtheta} \log p(\mathcal{D}|\vtheta) + \log p(\vtheta) .
\end{equation}
In \eqref{eq:map_estimation}, the log-likelihood $\log p(\mathcal{D}|\vtheta)$ is equivalent to the negative loss function, e.g. the cross-entropy loss for classification, and the prior distribution $p(\vtheta)$ over model parameters $\vtheta$ is usually a simple isotropic Gaussian $\mathcal{N}(\vtheta|\mathbf{0}, \lambda^{-1}\mathbf{I})$ with prior precision $\lambda \in \mathbb{R_{+}}$, which is closely related to $L_2$ regularization and weight decay. Note that $\vtheta_\text{MAP}$ corresponds to a mode of the posterior distribution $p(\vtheta | \mathcal{D})$ and thus it is not unique since $p(\vtheta | \mathcal{D})$ is generally multi-modal---different training procedures, or even different random initializations of $\vtheta$, could yield different solutions to \eqref{eq:map_estimation}.

Exploiting the randomness within MAP estimation, Deep Ensembles \citep[DE,][]{Lakshminarayanan2016SimpleAS} aggregate the predictions of multiple MAP estimates arising from different random parameter initializations. More formally, given a sequence of $K$ distinct MAP estimates $(\vtheta_\text{MAP}^{(k)})_{k=1}^K$, a DE simply averages (i.e.~with uniform weights) the predictions of the induced networks to obtain the predictive distribution
\begin{equation} \label{eq:de_pred}
    p_\text{DE}(\vy | \vx, \mathcal{D}) := \frac{1}{K} \sum_{k=1}^K p(\vy | \vx, \vtheta_\text{MAP}^{(k)}) .
\end{equation}
The number of ensemble members $K$ is typically chosen to be small, e.g. $K=5$. Despite their simplicity, DE has been shown to yield state-of-the-art results in uncertainty quantification \citep{Ovadia2019CanYT}.

The MAP estimate $\vtheta_\text{MAP}$ represents a single point estimate in the parameter space and hence it ignores the uncertainty inherent in the parameters of the network. A Bayesian treatment of $\rvf_\vtheta$, which results in a Bayesian neural network (BNN), attempts to capture this uncertainty by inferring the full posterior distribution $p(\vtheta | \mathcal{D})$. Alas, this is computationally intractable due to the nonlinear nature of the network $\rvf_\vtheta$, requiring the use of approximate inference techniques. Given an approximate, easy-to-sample-from posterior $q(\vtheta | \mathcal{D})$, predictions can then be made via Monte Carlo (MC) integration:
\begin{equation} \label{eq:mc_integral}
    p_\text{BNN}(\vy | \vx, \mathcal{D}) = \int p(\vy | \vx, \vtheta) \, q(\vtheta | \mathcal{D}) \,d\vtheta
        \ \approx \ \frac{1}{S} \sum_{s=1}^S p(\vy | \vx, \vtheta^{(s)}) ; \quad \vtheta^{(s)} \sim q(\vtheta | \mathcal{D}) .
\end{equation}

\subsection{(Last-Layer) Laplace Approximation}
\label{subsec:ll}

A standard way to define the approximate posterior $q(\vtheta | \mathcal{D})$ is via a simple Gaussian approximation $\mathcal{N}(\vtheta|\vmu, \mSigma)$. One option is to employ the Laplace approximation \cite{mackay1992practical}.
In this technique, we apply a second-order Taylor expansion at $\vtheta_{\text{MAP}}$ to the log joint distribution, and then choose $\vmu = \vtheta_{\text{MAP}}$ and $\mSigma = (\mH_\vtheta + \lambda \mathbf{I})^{-1} \in \mathbb{R}^{M \times M},$ where $\mH_\vtheta$ is the Hessian of $-\log p(\mathcal{D}|\vtheta)$ w.r.t. $\vtheta$ evaluated at $\vtheta_{\text{MAP}}$.

Recently it has been shown that last-layer Bayesian approximations yield a competitive performance compared to all-layer ones \citep{SnoekRSKSSPPA15,ober2019benchmarking,brosse2020last}. In this case, we consider the network $\rvf_\vtheta$ as a linear function in the weight matrix $\mW \in \mathbb{R}^{C \times P}$ of the last layer, i.e. $\rvf_{\mW}(\vx_*) = \mW \vphi(\vx_*),$ with \emph{fixed} features $\vphi(\vx_*) =: \vphi_* \in \mathbb{R}^P$, which are simply the output of the penultimate layer of the neural network given an input $\vx_*$. Note, that this formulation includes the case where we have a bias parameter, by simply employing the standard bias trick. In this setting, we therefore only need to do a Laplace approximation on a single weight matrix $\mW$: we obtain a Gaussian approximation $\mathcal{N}(\mathrm{vec}(\mW) | \vmu, \mSigma)$ with $\vmu = \text{vec}(\mW_{\text{MAP}}) \in \mathbb{R}^{CP}$ and $\mSigma = (\mH_\mW + \lambda \mathbf{I})^{-1} \in \mathbb{R}^{CP \times CP},$ where $\mH_\mW$ is the Hessian of $-\log p(\mathcal{D}|\mW)$ w.r.t. $\text{vec}(\mW)$ evaluated at $\text{vec}(\mW_{\text{MAP}})$. This method is called the \emph{last-layer Laplace approximation} \citep[LLLA,][]{Kristiadi2020BeingBE}.

Let $\vx_* \in \R^D$ be an arbitrary test point. Since $\rvf_{\mW}$ is linear in $\mW,$ we also have a Gaussian distribution $p(\rvf_* | \vx_*, \mathcal{D}) = \mathcal{N}(\rvf_* |\vm_*, \mC_*)$ over the marginal network outputs $\rvf_* := \rvf(\vx_*)$, where $\vm_* = \mW_{\text{MAP}} \vphi_* \in \mathbb{R}^C$ and $\mC_* = (\vphi_*^\top \otimes \mathbf{I}) \mSigma (\vphi_* \otimes \mathbf{I}) \in \mathbb{R}^{C \times C}$.\footnote{$\otimes$ denotes the Kronecker product.} For multi-class classification, the predictive distribution is therefore given by
\begin{equation} \label{eq:pred}
    p(\vy=\ve_c | \vx_*, \mathcal{D}) = \int \sigma(\rvf_*)_c \, p(\rvf_*|\mathcal{D}) \, d\rvf_*,
\end{equation}
where $\sigma(\rvf_*)_c = \exp(\rvf_{*c}) / \sum_{i=1}^C \exp(\rvf_{*i})$ is the softmax function. This integral does not have an analytic solution, but it can be approximated via the \emph{multi-class probit approximation} \citep[MPA,][]{Gibbs98, lu2021meanfield}:
\begin{equation} \label{eq:extended_mackay}
        p(\vy=\ve_c|\vx_*, \mathcal{D}) \approx \sigma(\vz_*)_c,
\end{equation}
where $\vz_*$ is a vector with $i$-th component $\vz_{*i} = \vm_{*i} / \sqrt{1 + (\pi/8) \mC_{*ii}}.$ Intuitively, the output mean $\vm_{*i}$ is scaled by a factor which depends on the (non-negative) output variance $\mC_{*ii}$, thus $\vz_{*i}$ can only stay the same or decrease. This is conceptually similar to temperature scaling \citep{guo2017calibration}, but with a different temperature parameter $T \geq 1$ for each class and data point.

\section{Mixtures of Laplace Approximations}

Comparing \eqref{eq:de_pred} and \eqref{eq:mc_integral}, one can notice that DE approximate the BNN predictive distribution. Why, then, do BNNs perform worse than DE? One hypothesis is that each MAP estimate within DE constitutes a mode of $p(\vtheta | \mathcal{D})$---therefore, DE can capture $K$ modes of the posterior, whereas Gaussian-based BNNs can only capture one mode. DE can thus be seen as capturing \emph{global} uncertainty of the posterior, whereas Gaussian-based BNNs capture \emph{local} uncertainty \citep{Fort2019DeepEA, Wilson2020BayesianDL}.
In this section, we build on top of the MAP estimates provided by a DE to construct a BNN with a MoG posterior in a \emph{post-hoc} manner via Laplace approximations. We call the resulting method \emph{Mixtures of Laplace Approximations} (MoLA). While MoLA can be applied using any type of Laplace approximation, in this work we focus on a lightweight variant, where LLLA is employed. We henceforth assume LLLA by default when referring to MoLA.

Consider DE's MAP estimates $(\vtheta_\text{MAP}^{(k)})_{k=1}^K$ of an $L$-layer network. For each $k = 1 \dots, K$, we treat the first $L-1$ layers as a fixed feature map $\vphi^{(k)}$ and construct a Gaussian approximate posterior $\mathcal{N}(\mathrm{vec}(\mW) | \vmu^{(k)}, \mSigma^{(k)})$ over the last-layer weights via LLLA. That is, we set $\vmu^{(k)} = \mathrm{vec}(\mW^{(k)}_\text{MAP})$ and $\mSigma^{(k)}$ to be the inverse Hessian of the negative log-posterior w.r.t. $\mathrm{vec}(\mW)$ at $\vmu^{(k)}$. Given a sequence $\boldsymbol{\pi} := (\pi^{(k)})_{k=1}^K$ of non-negative real numbers with $\sum_{k=1}^K \pi^{(k)} = 1$, we define the approximate posterior as\footnote{The choice of $\boldsymbol{\pi}$ shall be discussed in \Cref{subsec:practical}.}
\begin{equation} \label{eq:mola_posterior}
    p_\text{MoLA}(\mathrm{vec}(\mW) | \mathcal{D}) := \sum_{k=1}^K \pi^{(k)} \, \mathcal{N}(\mathrm{vec}(\mW) | \vmu^{(k)}, \mSigma^{(k)}) .
\end{equation}
This posterior can be seen intuitively as endowing each of DE's modes with a sense of local uncertainty, cf. \Cref{fig:one} for an illustration.
MoLA's posterior also induces a MoG distribution over the network outputs due to the linearity of $\rvf_{\mW}$ in $\mW$. That is, given any input $\vx_* \in \R^D$, we have
\begin{equation} \label{eq:mola_output_dist}
    p_\text{MoLA}(\rvf_* | \vx_*, \mathcal{D}) := \sum_{k=1}^K \pi^{(k)} \, \mathcal{N}(\rvf_* | \vm_*^{(k)}, \mC_*^{(k)}).
\end{equation}
As a consequence, by employing the approximation from \eqref{eq:extended_mackay}, MoLA's predictive distribution takes a particularly simple form:
\begin{equation} \label{eq:mola_predictive}
    p_\text{MoLA}(\vy = \ve_c | \vx_*, \mathcal{D}) \approx \sum_{k=1}^K \pi^{(k)} \, \sigma(\vz_*^{(k)})_c ,
    \end{equation}
where $\vz_*^{(k)}$ is as defined in \eqref{eq:extended_mackay} for each $k = 1, \dots, K$ (see \Cref{sec:proofs} for the derivation).
We note that \eqref{eq:mola_predictive} is simply the weighted sum of the predictive distributions \eqref{eq:extended_mackay} induced by MoLA's mixture components. Due to the LLLA used, this can be computed at low overhead---cheaper than standard MC-integration. While other kinds of Laplace approximations can also satisfy this, they require a costly linearization over the entire network's parameters. LLLA is thus a favorable practical choice.

\begin{table}[t]
    \caption{The memory and computation costs of all methods after training in $\mathcal{O}$ notation. We also measure the wall-clock time (in seconds) and memory (in megabyte) for a Wide-ResNet on the CIFAR-10 test set, on a single NVIDIA RTX 2080Ti GPU. $N, M, K, C, P$ are defined in \Secref{sec:background}. For SWAG and MultiSWAG (MSWAG), $S$ denotes the number of MC samples used to approximate the predictive distribution, and $R$ denotes the number of model snapshots used to approximate the posterior. For prediction, the theoretical complexity is for a single test point.}
    \label{tab:costs}
    \vspace{0.5em}

    \centering
    \fontsize{8}{9}\selectfont
    \renewcommand{\tabcolsep}{4pt}
    \begin{sc}
    \begin{tabular}{lccrccccrcr}
        \toprule
            & & \multicolumn{2}{c}{Inference} & & & & \multicolumn{4}{c}{Prediction} \\
            Method & & \multicolumn{2}{c}{Computation} & & & & \multicolumn{2}{c}{Computation} &  \multicolumn{2}{c}{Memory} \\
            \midrule
            MAP &        & - & {-} & & & & $M$ & [1.17s] & $M$ & [11MB] \\
            LLLA & & $NM \hspace{-0.5mm}+ \hspace{-0.5mm}C^3 \hspace{-0.5mm}+ \hspace{-0.5mm}P^3$ & [42.58s] & & & & $M$ & [1.14s] & $M \hspace{-0.5mm}+ \hspace{-0.5mm}C^2 \hspace{-0.5mm}+ \hspace{-0.5mm}P^2$ & [12MB] \\
            SWAG & & $RNM$ & [1310.81s] & & & & $SRM$ & [21.90s] & $RM$ & [440MB] \\
            \midrule
            DE &   & - & {-}  & & & & $\textcolor{red}{K}M$ & [3.80s] & $\textcolor{red}{K}M$ & [55MB] \\
            MoLA &  & $\textcolor{red}{K}(NM \hspace{-0.5mm} + \hspace{-0.5mm}C^3 \hspace{-0.5mm}+ \hspace{-0.5mm}P^3)$ & [155.70s] & & & & $\textcolor{red}{K}M$ & [4.04s] & $\textcolor{red}{K}(M \hspace{-0.5mm}+\hspace{-0.5mm} C^2\hspace{-0.5mm} +\hspace{-0.5mm} P^2)$ & [58MB] \\
            MSWAG &  & $\textcolor{red}{K}RNM$ & [6718.42s] & & & & $\textcolor{red}{K}SRM$ & [114.48s] & $\textcolor{red}{K}RM$ & [2200MB] \\
        \bottomrule
    \end{tabular}
    \end{sc}
\end{table}

\subsection{Analysis}
\label{subsec:analysis}

Feed-forward DNNs with ReLU nonlinearity---the so-called \emph{ReLU networks}---are provably overconfident ``far away'' from the training data, in the sense that a training input $\vx$ is scaled with a scalar $\delta > 0$, and as $\delta \rightarrow \infty$ \cite{Hein_2019_CVPR}. For the binary classification setting it has been shown that an approximate (Gaussian-form) Bayesian treatment can mitigate this issue \citep{Kristiadi2020BeingBE}. We are thus interested to know whether this desirable property of single-Gaussian BNNs also holds for MoLA.\footnote{The results in this section also hold for general last-layer Gaussian- (\Cref{lemma:lll_multiclass}) and MoG-based (\Cref{thm:mola_confidence}) BNNs. We focus on MoLA for clarity.} Our analysis omits the bias parameters of the network and we employ the multi-class probit approximation \eqref{eq:extended_mackay} for analytical tractability. All proofs are in \Cref{sec:proofs}.

As a preliminary, we define the \emph{confidence} of a prediction for an input $\vx_* \in \R^D$ by $\max_{i \in \{1, ..., C\}} p(\vy = \ve_i|\vx_*, \mathcal{D})$, i.e. it is the probability associated with the predicted class label.
Also, note that the multi-class probit approximation only uses the diagonal elements $\mC_{*ii}$ of the output covariance matrix $\mC_*$. Hence, without loss of generality, for our analysis we only need to consider the posterior distributions $\mathcal{N}(\vw_{(i)} | \vmu_{(i)}, \mSigma_{(i)})$ over each row $\vw_{(i)}$ of the weight matrix $\mW$, instead of the full posterior over $\mathrm{vec}(\mW)$.

First, we present the following result which holds for single-Gaussian BNNs, as an extension of \citet{Kristiadi2020BeingBE}'s analysis to the multi-class classification case. It shows that asymptotically, the confidence of any input point is bounded away from the maximum confidence of $1$ and the tightness of this bound depends on the uncertainty encoded in the covariance matrix of the approximate posterior.

\vspace{0.5em}
\begin{restatable}{lemma}{lllmulticlass} \label{lemma:lll_multiclass}
    Suppose that $\rvf_\mW: \R^D \to \R^C$ is a ReLU network without bias. For any non-zero $\vx_* \in \mathbb{R}^D,$ there exists $\alpha > 0$ such that for all $\delta \geq \alpha$, we have under the multi-class probit approximation \eqref{eq:extended_mackay}
    \begin{equation*}
        p_\text{\emph{BNN}}(\vy = \ve_{c_*} | \delta \vx_*, \mathcal{D}) \leq \frac{1}{1 + \sum_{i \neq c_*} \exp(-(b_i + b_{c_*}))},
    \end{equation*}
    where $c_* = \argmax_{i \in \{1, ..., C\}} p_\text{\emph{BNN}}(\vy = \ve_i | \delta\vx_*, \mathcal{D})$ is the predicted class label and for each $i = 1, \dots, C$, and we define $b_i := \Vert \vmu_{(i)} \Vert_2 / \sqrt{(\pi/8) \lambda_{\emph{min}}(\mSigma_{(i)})},$ with $\lambda_{\emph{min}}(\mSigma_{(i)})$ being the smallest eigenvalue of $\mSigma_{(i)}$.
\end{restatable}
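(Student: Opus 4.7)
The plan is to reduce the problem to a uniform upper bound on the probit-approximated logits $\vz_{*i}$ and then invoke a routine softmax bound. The key structural ingredient is that a bias-free ReLU network is positively homogeneous of degree one in its input, so $\vphi(\delta \vx_*) = \delta \vphi(\vx_*)$ for every $\delta > 0$. Setting $\vp := \vphi(\vx_*)$, the last-layer marginal mean and variance from \Cref{subsec:ll} become $\vm_{*i} = \delta\, \vmu_{(i)}^\top \vp$ and $\mC_{*ii} = \delta^2\, \vp^\top \mSigma_{(i)} \vp$, so that
\[
\vz_{*i} \;=\; \frac{\delta\, \vmu_{(i)}^\top \vp}{\sqrt{1 + (\pi/8)\,\delta^2\, \vp^\top \mSigma_{(i)} \vp}}.
\]
I assume first that $\vp \neq \vzero$ and handle the degenerate case $\vp = \vzero$ at the end.

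Next I would establish the key bound $|\vz_{*i}| \le b_i$. A direct differentiation shows that $|\vz_{*i}|$ is monotone increasing in $\delta$ with limit $|\vmu_{(i)}^\top \vp|/\sqrt{(\pi/8)\,\vp^\top \mSigma_{(i)} \vp}$ as $\delta \to \infty$, so for every $\delta > 0$ it is bounded above by this limit (in the bias-free setting any $\alpha > 0$ works, e.g.\ $\alpha = 1$). Applying Cauchy-Schwarz in the numerator and the Rayleigh-quotient estimate $\vp^\top \mSigma_{(i)} \vp \ge \lambda_{\min}(\mSigma_{(i)}) \|\vp\|_2^2$ in the denominator cancels the $\|\vp\|_2$ factors and yields exactly the $b_i$ of the lemma. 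Hence $-b_i \le \vz_{*i} \le b_i$ for every $i$.

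Writing the predicted confidence as
\[
\sigma(\vz_*)_{c_*} \;=\; \frac{1}{1 + \sum_{i \neq c_*} \exp(\vz_{*i} - \vz_{*c_*})}
\]
and using $\vz_{*i} - \vz_{*c_*} \ge -b_i - b_{c_*}$ in each summand delivers the stated bound. The degenerate case $\vp = \vzero$ forces $\vz_* = \vzero$ and hence a uniform predictive $1/C$; this is consistent with the lemma because each exponential on the right-hand side is at most $1$, so the denominator is at most $C$ and the bound is always at least $1/C$.

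The main obstacle I anticipate is not any single algebraic step but being careful about the case $\vp = \vzero$, which can genuinely arise for a bias-free ReLU network whose first-layer pre-activations at $\vx_*$ are all non-positive; once that dichotomy is handled cleanly, the remainder is Cauchy-Schwarz combined with a Rayleigh quotient estimate and the standard softmax manipulation.
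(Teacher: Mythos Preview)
Your argument is correct and in fact more elementary than the paper's. The paper does not exploit positive homogeneity; instead it invokes the piecewise-affine structure of ReLU networks via the linear-region lemma of Hein et al.\ (their \Cref{lemma:linear_regions}): for any nonzero $\vx_*$ there is a threshold $\alpha$ beyond which the ray $\delta\vx_*$ stays in a single linear region $R$, and on $R$ the restrictions $|\vz|_R(\delta\vx_*)_i|$ are shown (citing Kristiadi et al.) to be increasing in $\delta$ with limit bounded by $b_i$. The final softmax manipulation is then identical to yours.

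Your observation that a bias-free ReLU network is globally positively homogeneous of degree one short-circuits the linear-region machinery entirely: the identity $\vphi(\delta\vx_*)=\delta\,\vphi(\vx_*)$ holds for \emph{all} $\delta>0$, so no threshold $\alpha$ is actually needed and the monotonicity-plus-limit argument becomes a two-line computation. This buys you a self-contained proof with no external lemmas. The paper's route, by contrast, is the one that would survive if biases were reinstated in the hidden layers (where homogeneity fails but piecewise affinity persists), so it is the more extensible framework even though it is heavier for the statement as given. Your handling of the degenerate case $\vp=\vzero$ is also a nice touch that the paper does not address explicitly.
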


To get an intuition for the behavior of the bound, we can consider two cases: When $\lambda_{\text{min}}(\mSigma_{(i)}) \rightarrow 0$ for all $i = 1, \dots, C$ the upper bound on the confidence approaches one, i.e. minimal uncertainty. Conversely, if $\forall i \in \{1, ..., C\}: \lambda_{\text{min}}(\mSigma_{(i)}) \rightarrow \infty,$ the upper bound approaches $1 / C,$ i.e. maximal uncertainty. This confirms the intuition that increased uncertainty in the weight space results in increased uncertainty in the function space.

Now we can easily use the upper bound provided by \Cref{lemma:lll_multiclass} to obtain an asymptotic confidence bound for MoLA. As in \Cref{lemma:lll_multiclass}, we only need to consider the posterior distributions $p_\text{MoLA}(\vw_{(i)} | \mathcal{D}) = \sum_{k=1}^K \pi^{(k)} \mathcal{N}(\vw_{(i)} | \vmu_{(i)}^{(k)}, \mSigma_{(i)}^{(k)})$ over each row $\vw_{(i)}$ of the weight matrix $\mW$, instead of the full MoLA posterior in \eqref{eq:mola_posterior}.
\vspace{0.5em}
\begin{restatable}{theorem}{molaconfidence} \label{thm:mola_confidence}
    Suppose that $\rvf_\mW: \R^D \to \R^C$ is a ReLU network without bias, equipped with a MoLA posterior $p_\text{\emph{MoLA}}(\vw_{(i)} | \mathcal{D})$ for each $i = 1, \dots, C.$ Using the approximation in \eqref{eq:mola_predictive}, for any non-zero $\vx_* \in \mathbb{R}^D$ there exists $\alpha > 0$ such that for all $\delta \geq \alpha,$ we have
    \begin{align*}
        p_\text{\emph{MoLA}}(\vy = \ve_{c_*} | \delta \vx_*, \mathcal{D})
           \ \leq \ \sum_{k=1}^K  \frac{\pi^{(k)}}{1 + \sum_{i \neq c_*} \exp(-(b_i^{(k)} + b_{c_*}^{(k)}))},
    \end{align*}
    where for each $k = 1, \dots, K$, the integer $c_*^{(k)}$ is the predicted class label according to \eqref{eq:extended_mackay} and $b_i^{(k)}$ is defined as in \Cref{lemma:lll_multiclass}, both under the $k$-th mixture component of $p_\text{\emph{MoLA}}.$
\end{restatable}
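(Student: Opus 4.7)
The plan is to view MoLA's approximate predictive as a convex combination and reduce, component by component, to the single-Gaussian bound of \Cref{lemma:lll_multiclass}. By \eqref{eq:mola_predictive},
\begin{equation*}
    p_\text{MoLA}(\vy = \ve_{c_*} \mid \delta \vx_*, \mathcal{D}) \ \approx \ \sum_{k=1}^K \pi^{(k)} \, \sigma(\vz_*^{(k)}(\delta))_{c_*} ,
\end{equation*}
so it suffices to upper bound each summand $\sigma(\vz_*^{(k)}(\delta))_{c_*}$ uniformly for sufficiently large $\delta$ and then sum with the convex weights $\pi^{(k)}$.

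For each fixed $k$, I would replicate the argument from \Cref{lemma:lll_multiclass} applied to the $k$-th component. Because the network has no bias and ReLU is $1$-positive-homogeneous, the feature map obeys $\vphi^{(k)}(\delta \vx_*) = \delta \, \vphi^{(k)}(\vx_*)$ for every $\delta > 0$. Substituting this into
\begin{equation*}
    \vz_{*i}^{(k)}(\delta) = \frac{\delta \, \vmu_{(i)}^{(k) \top} \vphi^{(k)}(\vx_*)}{\sqrt{1 + (\pi/8) \, \delta^2 \, \vphi^{(k)}(\vx_*)^{\top} \mSigma_{(i)}^{(k)} \vphi^{(k)}(\vx_*)}} ,
\end{equation*}
dropping the $1$ inside the square root for $\delta$ large, then using Cauchy--Schwarz in the numerator and $\vphi^{\top} \mSigma \vphi \geq \lambda_{\min}(\mSigma) \|\vphi\|_2^2$ in the denominator, yields $|\vz_{*i}^{(k)}(\delta)| \leq b_i^{(k)}$ for all $\delta \geq \alpha_k$ for some component-specific threshold $\alpha_k > 0$.

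The crucial observation for the mixture case is that the softmax bound underlying \Cref{lemma:lll_multiclass} does \emph{not} require $c_*$ to coincide with the component-wise predicted class $c_*^{(k)}$. For any fixed $c_* \in \{1, \dots, C\}$ one has
\begin{equation*}
    \sigma(\vz_*^{(k)})_{c_*} = \frac{1}{1 + \sum_{i \neq c_*} \exp(\vz_{*i}^{(k)} - \vz_{*c_*}^{(k)})} ,
\end{equation*}
and combining $\vz_{*c_*}^{(k)} \leq b_{c_*}^{(k)}$ with $\vz_{*i}^{(k)} \geq - b_i^{(k)}$ gives $\vz_{*i}^{(k)} - \vz_{*c_*}^{(k)} \geq -(b_i^{(k)} + b_{c_*}^{(k)})$. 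Hence $\sigma(\vz_*^{(k)})_{c_*} \leq 1 / \bigl(1 + \sum_{i \neq c_*} \exp(-(b_i^{(k)} + b_{c_*}^{(k)}))\bigr)$. Choosing $\alpha := \max_k \alpha_k$ and weighting each component by $\pi^{(k)}$ and summing yields the claimed bound.

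The only (mild) obstacle is this last bookkeeping point: since the MoLA-predicted class $c_*$ need not be the argmax of any single $\vz_*^{(k)}$, one has to verify that the per-component inequality from \Cref{lemma:lll_multiclass} is really a uniform magnitude-based statement about $\sigma(\vz_*^{(k)})_c$ for arbitrary $c$, not one that exploits $c$ being the maximizer. The magnitude bounds $|\vz_{*i}^{(k)}| \leq b_i^{(k)}$ provide exactly this uniformity, so the mixture bound follows by linearity in $\pi^{(k)}$.
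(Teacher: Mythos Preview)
Your proposal is correct and follows the same route as the paper: write the MoLA predictive as the convex combination \eqref{eq:mola_predictive} and apply the bound from \Cref{lemma:lll_multiclass} to each component, then sum with the weights $\pi^{(k)}$. Your explicit observation that the per-component softmax bound holds for \emph{any} class index $c_*$ (not only the component-wise argmax $c_*^{(k)}$) is a point the paper's proof leaves implicit when it simply says ``apply \Cref{lemma:lll_multiclass} to each mixture component'', and your magnitude argument $|\vz_{*i}^{(k)}| \leq b_i^{(k)}$ correctly justifies this.
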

Note that when all mixture components of $p_\text{MoLA}$ are identical and $\boldsymbol{\pi}$ is a uniform probability vector, then we recover the result in \Cref{lemma:lll_multiclass}. However, this is unlikely under the assumption that each MAP estimate on which MoLA is built upon is obtained via a random initialization. In fact, MoLA's bound can be tighter than that of \Cref{lemma:lll_multiclass} since intuitively it seems unlikely that the bounds on all components are worse than the one of any single randomly chosen component, given that they are all obtained via random initialization. 

\subsection{Practical Considerations}
\label{subsec:practical}

\paragraph*{Alternative Laplace approximation for MoLA.} While Laplace approximations over all weights might not be feasible for very large networks, one can also use lightweight variants of Laplace approximations, such as a Laplace approximation over a \emph{subset} of the weights \citep{daxberger2020bayesian}, instead of LLLA.

\paragraph*{Kronecker-factored approximation of the Hessian.} If the output dimension of the DNN, i.e. the number of classes $C$, is sufficiently small, one can usually use the full Hessian.
However, for problems with many classes, like ImageNet ($C = 1000$), this becomes infeasible for even the LLLA. To ensure general applicability, we use a Kronecker factored (K-FAC) generalized Gauss-Newton (GGN) approximation \cite{martens2015optimizing, Ritter2018ASL} which can be efficiently computed using automatic differentiation \cite{dangel2020backpack}. 
Note that for LLLA the GGN and the Hessian coincide due to the linearity of the output layer. Under the K-FAC approximation, we only have to compute, invert, and store two matrices of size $C \times C$ and $P \times P$, instead of one $CP \times CP$ matrix. Moreover, we empirically find that the difference in performance between a K-FAC and full GGN is relatively small (cf. \Cref{fig:llla} for example); hence, we choose the K-FAC approximation in all our experiments.

\paragraph*{Choice of mixture weights.} So far we have yet to discuss the choice of the mixture weights $\boldsymbol{\pi}$ of the MoLA posterior. 
Intuitively, one way to  pick a mixture coefficient is by picking it proportional to the importance of the corresponding mixture component. The marginal likelihood of each the mixture component can be used to do so \citep{mackay1992practical, immer2021scalable}, leading to the following mixture coefficient
\begin{equation}
    \pi^{(k)} := \frac{p(\mathcal{D}|\mathcal{M}_k)}{\sum_{k^{'}=1}^K p(\mathcal{D}|\mathcal{M}_{k^{'}})} \qquad \text{for all}\enspace k = 1, \dots, K ,
\end{equation}
where $\mathcal{M}_k$ is the model (i.e. the choice of architecture and hyperparameters) of the $k$th mixture component. See \Cref{sec:marglik_weights} for a more formal derivation. In our setting, where the only difference between the mixture components is the random initialization of the DNN weights, all models $\mathcal{M}_k$ are identical. Hence, this approach should lead to uniform mixture weights $\pi^{(k)} = 1/K$ for all $k = 1, \dots, K$. We confirm this empirically and find that the marginal likelihoods of the components only vary minimally, consistent with what \citet{immer2021scalable} report. Therefore, in all our experiments, we use this uniform weighting. 

\paragraph*{\textit{Post-hoc} tuning of prior precision.} To be able to apply the Laplace approximation \textit{post-hoc} on an arbitrary pre-trained DNN, the prior precision $\lambda$ can usually not be set to the value used for $L_2$ regularization or weight decay during training \cite{Ritter2018ASL, immer2020improving}.
Thus, we tune the prior precision for MoLA, assuming a single prior precision for all mixture components---consistent with what \citet{rahaman2020uncertainty} report for temperature scaling on a regular ensemble---of $p_\text{MoLA}(\mathrm{vec}(\mW) | \mathcal{D}).$ For our experiments we use thresholds on the validation set's average confidence and Brier score, choosing the smallest prior precision which results in meeting the thresholds (cf. \Cref{sec:details}).
Alternatively, the prior precision can also be tuned by optimizing the components' marginal likelihoods \cite{immer2021scalable}, w.r.t. to a proper scoring rule like the validation Brier score or log-likelihood, or, using OOD data \cite{hendrycks2019oe}.

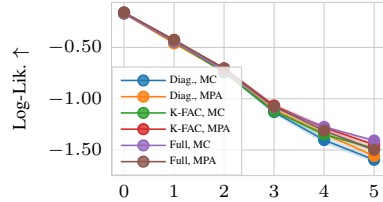
\begin{wrapfigure}[11]{r}{0.375\textwidth}
    \centering
    \vspace{-1em}

    \begin{tikzpicture}

\definecolor{color0}{rgb}{0.12156862745098,0.466666666666667,0.705882352941177}
\definecolor{color1}{rgb}{1,0.498039215686275,0.0549019607843137}
\definecolor{color2}{rgb}{0.172549019607843,0.627450980392157,0.172549019607843}
\definecolor{color3}{rgb}{0.83921568627451,0.152941176470588,0.156862745098039}
\definecolor{color4}{rgb}{0.580392156862745,0.403921568627451,0.741176470588235}
\definecolor{color5}{rgb}{0.549019607843137,0.337254901960784,0.294117647058824}

\tikzstyle{every node}=[font=\scriptsize]

\begin{groupplot}[group style={group size=2 by 2}]
\nextgroupplot[
width=\linewidth,
height=0.17\textheight,
legend cell align={left},
legend style={nodes={scale=0.6, transform shape},fill opacity=0.8, draw opacity=1, text opacity=1, at={(0, 0)}, anchor=south west, draw=white!80!black},
axis line style={white!80!black},
tick align=inside,
x grid style={white!80!black},
xmajorgrids,
xmin=-0.25, xmax=5.25,
xtick style={draw=none},
xtick={-1,0,1,2,3,4,5,6},
y grid style={white!80!black},
ylabel={Log-Lik. $\uparrow$},
ymajorgrids,
ymin=-1.75, ymax=-0.06,
ytick style={draw=none},
ytick={-1.5,-1,-0.5,0},
yticklabels={\(\displaystyle  {−1.50}\),\(\displaystyle  {−1.00}\),\(\displaystyle  {−0.50}\),\(\displaystyle {0.00}\)}
]
\path [draw=white, fill=color0, opacity=0.2]
(axis cs:0,-0.165892607769898)
--(axis cs:0,-0.161680069270202)
--(axis cs:1,-0.434722075102017)
--(axis cs:2,-0.718617472028238)
--(axis cs:3,-1.1015382796368)
--(axis cs:4,-1.36482149754111)
--(axis cs:5,-1.56387169006552)
--(axis cs:5,-1.62834011731261)
--(axis cs:5,-1.62834011731261)
--(axis cs:4,-1.44205710430876)
--(axis cs:3,-1.15873750282118)
--(axis cs:2,-0.755788163118856)
--(axis cs:1,-0.447993227988396)
--(axis cs:0,-0.165892607769898)
--cycle;

\path [draw=white, fill=color1, opacity=0.2]
(axis cs:0,-0.170980025752919)
--(axis cs:0,-0.167937613597972)
--(axis cs:1,-0.454485860598909)
--(axis cs:2,-0.722770689725162)
--(axis cs:3,-1.0841502560853)
--(axis cs:4,-1.31066248200441)
--(axis cs:5,-1.53111397120391)
--(axis cs:5,-1.58452025724814)
--(axis cs:5,-1.58452025724814)
--(axis cs:4,-1.37369472421304)
--(axis cs:3,-1.12889708922873)
--(axis cs:2,-0.75461795095833)
--(axis cs:1,-0.46475402607915)
--(axis cs:0,-0.170980025752919)
--cycle;

\path [draw=white, fill=color2, opacity=0.2]
(axis cs:0,-0.168538434570739)
--(axis cs:0,-0.163318285590699)
--(axis cs:1,-0.433402027717032)
--(axis cs:2,-0.717035981338182)
--(axis cs:3,-1.08737655186496)
--(axis cs:4,-1.30622338662305)
--(axis cs:5,-1.45347246198313)
--(axis cs:5,-1.51701262464864)
--(axis cs:5,-1.51701262464864)
--(axis cs:4,-1.39022261643888)
--(axis cs:3,-1.15492119136014)
--(axis cs:2,-0.765620970502537)
--(axis cs:1,-0.454137342654468)
--(axis cs:0,-0.168538434570739)
--cycle;

\path [draw=white, fill=color3, opacity=0.2]
(axis cs:0,-0.161031724204362)
--(axis cs:0,-0.157540758858382)
--(axis cs:1,-0.420121332643638)
--(axis cs:2,-0.686263250317282)
--(axis cs:3,-1.04002498037989)
--(axis cs:4,-1.25232322460452)
--(axis cs:5,-1.42243196265747)
--(axis cs:5,-1.47427104193797)
--(axis cs:5,-1.47427104193797)
--(axis cs:4,-1.31306269627612)
--(axis cs:3,-1.09274448923732)
--(axis cs:2,-0.722579113504384)
--(axis cs:1,-0.434277249007413)
--(axis cs:0,-0.161031724204362)
--cycle;

\path [draw=white, fill=color4, opacity=0.2]
(axis cs:0,-0.17144105052596)
--(axis cs:0,-0.166741693881694)
--(axis cs:1,-0.435798122149438)
--(axis cs:2,-0.709088990089258)
--(axis cs:3,-1.06186540213448)
--(axis cs:4,-1.25466363860893)
--(axis cs:5,-1.39366498200185)
--(axis cs:5,-1.42301948853088)
--(axis cs:5,-1.42301948853088)
--(axis cs:4,-1.29571446432941)
--(axis cs:3,-1.10254564004081)
--(axis cs:2,-0.738592575297197)
--(axis cs:1,-0.447508251313239)
--(axis cs:0,-0.17144105052596)
--cycle;

\path [draw=white, fill=color5, opacity=0.2]
(axis cs:0,-0.160215811519028)
--(axis cs:0,-0.156657397099136)
--(axis cs:1,-0.419035112880339)
--(axis cs:2,-0.686920769432478)
--(axis cs:3,-1.05025526855592)
--(axis cs:4,-1.27896164843756)
--(axis cs:5,-1.47114243959528)
--(axis cs:5,-1.52805869587479)
--(axis cs:5,-1.52805869587479)
--(axis cs:4,-1.34428181556187)
--(axis cs:3,-1.10221853793975)
--(axis cs:2,-0.721318185073443)
--(axis cs:1,-0.431484198096325)
--(axis cs:0,-0.160215811519028)
--cycle;

\addplot [semithick, color0, mark=*, mark size=2, mark options={solid}]
table {%
0 -0.16378633852005
1 -0.441357651545207
2 -0.737202817573547
3 -1.13013789122899
4 -1.40343930092494
5 -1.59610590368907
};
\addlegendentry{Diag., MC}

\addplot [semithick, color1, mark=*, mark size=2, mark options={solid}]
table {%
0 -0.169458819675446
1 -0.45961994333903
2 -0.738694320341746
3 -1.10652367265701
4 -1.34217860310872
5 -1.55781711422602
};
\addlegendentry{Diag., MPA}

\addplot [semithick, color2, mark=*, mark size=2, mark options={solid}]
table {%
0 -0.165928360080719
1 -0.44376968518575
2 -0.741328475920359
3 -1.12114887161255
4 -1.34822300153097
5 -1.48524254331589
};
\addlegendentry{K-FAC, MC}

\addplot [semithick, color3, mark=*, mark size=2, mark options={solid}]
table {%
0 -0.159286241531372
1 -0.427199290825526
2 -0.704421181910833
3 -1.0663847348086
4 -1.28269296044032
5 -1.44835150229772
};
\addlegendentry{K-FAC, MPA}

\addplot [semithick, color4, mark=*, mark size=2, mark options={solid}]
table {%
0 -0.169091372203827
1 -0.441653186731338
2 -0.723840782693227
3 -1.08220552108765
4 -1.27518905146917
5 -1.40834223526637
};
\addlegendentry{Full, MC}

\addplot [semithick, color5, mark=*, mark size=2, mark options={solid}]
table {%
0 -0.158436604309082
1 -0.425259655488332
2 -0.70411947725296
3 -1.07623690324783
4 -1.31162173199972
5 -1.49960056773504
};
\addlegendentry{Full, MPA}

\end{groupplot}

\end{tikzpicture}

    \caption{Comparisons of LLLAs on CIFAR-10-C \citep{hendrycks2018benchmarking} in terms of log-likelihood.}
    \label{fig:llla}
\end{wrapfigure}

\paragraph*{Approximating the predictive distribution.} While one can use the standard MC-integral \eqref{eq:mc_integral} to approximate the predictive distribution $p_\text{{MoLA}}(\vy = \ve_c | \vx_*, \mathcal{D})$, in our last-layer setting the closed-form MPA \eqref{eq:extended_mackay} is not only theoretically useful, but also computationally efficient (cf. \Cref{tab:costs}). Moreover, it achieves competitive uncertainty calibration in terms of log-likelihood (\Cref{fig:llla}).

\paragraph*{Efficient ensembling techniques.} When no set of pre-trained DNNs is available, MoLA is more expensive than single-model methods. In any case, MoLA requires $K$ forward passes for prediction. Leveraging recent work by \citet{Havasi2021mimo}, we can leverage their multi-input multi-output (MIMO) method to implement MoLA within a \emph{single} DNN. The resulting method MIMO-MoLA applies MoLA to multiple independent subnetworks within one DNN. We tested MIMO-MoLA on CIFAR-10-C as a proof-of-concept and find very competitive performance compared to the other methods, especially for a single-model method (see \Cref{fig:mnist_cifar10}).

\paragraph*{Limitations.}

As opposed to other common Bayesian deep learning methods, MoLA cannot be directly applied to other interesting problem domains besides predictive uncertainty quantification, such as continual learning -- we leave this for future work.

\section{Related Work}
\label{sec:related_work}

Bayesian deep learning has a long history and many methods besides the Laplace approximation have been proposed to infer an approximate posterior, such as VI \cite{Graves2011PracticalVI, blundell2015weight, Khan2018FastAS, Zhang2018NoisyNG} and Markov Chain Monte Carlo \cite{neal1996, Welling2011BayesianLV, zhang2019cyclical} methods.
The idea of capturing multiple posterior modes with a BNN has previously been explored. \citet{Wilson2020BayesianDL} proposed a method called MultiSWAG which takes multiple samples around several MAP estimates of a DNN to construct a MoG approximation.
Other methods that also sample around a single mode have been combined with Deep Ensembles, such as subspace sampling \citep[][similar to SWAG]{Fort2019DeepEA} and MC dropout \citep{Filos2019ASC}.
Meanwhile, \citet{dusenberry2020efficient} proposed to construct a MoG posterior via VI with an efficient rank-one parameterization. Their method requires full training (thus non \emph{post-hoc}) but is more efficient compared to training multiple DNNs from scratch.

The efficacy of last-layer Bayesian approximations have been shown previously \citep{SnoekRSKSSPPA15, riquelme2018deep, Pinsler0NH19, Ovadia2019CanYT, Liu2020SimpleAP, ober2019benchmarking, brosse2020last}. Especially for the Gaussian-based variants, their properties have theoretically been studied by \citet{Kristiadi2020BeingBE}. However \emph{mixtures} of last-layer approximate posteriors have not been studied---both theoretically and empirically---so far.

Recently, the linearization of DNNs in the context of Laplace approximations has been explored \cite{khan2019approximate, Foong2019InBetweenUI, immer2020improving, Kristiadi2020BeingBE}. While this can also lead to different approximations to the predictive distribution, it requires the computation of Jacobian matrices which is generally expensive. In contrast, MoLA can make use of the multi-class probit approximation at a low overhead due to its last-layer nature.

\section{Extended Experiments}
\label{sec:extended_experiments}

\subsection{Setup}

We focus on evaluating the predictive uncertainty of MoLA on image data. Specifically, we consider the benchmark datasets of \citet{hendrycks2018benchmarking, Ovadia2019CanYT}: (i) the rotated MNIST (MNIST-R) dataset, which consists of transformed MNIST images, rotated with increasing angle up to 180 degrees, (ii) the corrupted CIFAR-10 (CIFAR-10-C), and (iii) corrupted ImageNet (ImageNet-C) datasets, both consisting of 5 severity levels of 16/19 different perturbations of the respective original dataset (cf. \Cref{sec:details}).
Moreover, we also study the performance of MoLA in OOD detection tasks with MNIST and CIFAR-10 as the in-distribution datasets.
\begin{figure*}[t]
    \centering

    \input{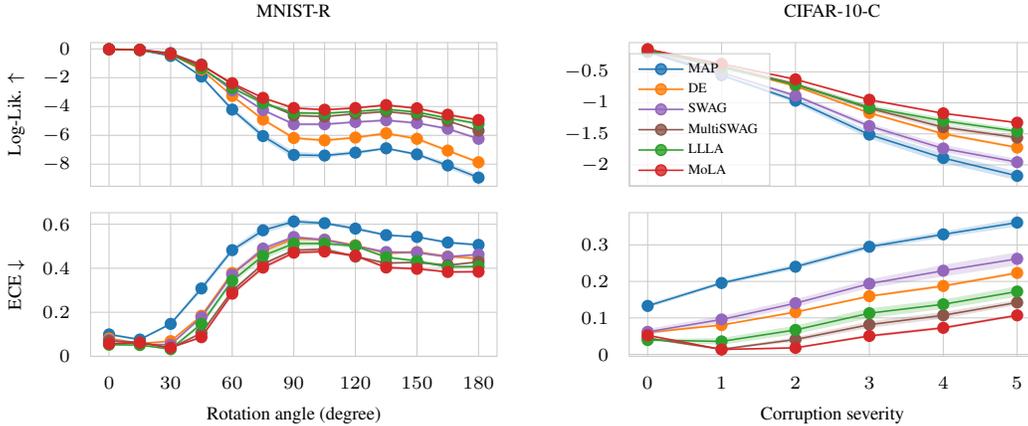}

    \vspace{-1em}
    \caption{Comparison of all methods on MNIST-R (\textbf{left column}) and corrupted CIFAR-10-C (\textbf{right column}), in terms of the log-likelihood (\textbf{top row}, higher is better) and expected calibration error (\textbf{bottom row}, lower is better) metrics. Lines and dots represent means while shades represent standard error of five runs with models trained with different random initalizations.}
    \label{fig:mnist_cifar10}
\end{figure*}

We compare MoLA against the following baselines: (i) the standard MAP-trained network (MAP), (ii) Deep Ensemble \citep[DE,][]{Lakshminarayanan2016SimpleAS}, (iii) Stochastic Weight Averaging Gaussian \citep[SWAG,][]{maddox2019simple}, (iv) MultiSWAG \citep[MSWAG,][]{Wilson2020BayesianDL}, (v) and the single-Gaussian-based LLLA.

Note in particular that MultiSWAG represents the state-of-the-art \emph{post-hoc} MoG-based Bayesian methods. While the method of \citet{dusenberry2020efficient} also constructs MoG posteriors, its performance is similar to Deep Ensemble. Moreover, Deep Ensemble has been shown to perform better than other methods like VI, MC-Dropout, and temperature scaling---at least on larger benchmarks like CIFAR-10-C and especially ImageNet-C \citep{Ovadia2019CanYT}. Thus, the Deep Ensemble baseline is already representative of these methods. We do not compare against SWAG and MultiSWAG on ImageNet-C, due to their high computational and memory cost, see \Cref{subsec:cost} for context. We use standard DNN architectures (LeNet, ResNet, and WideResNet) and training methods. For the MNIST and CIFAR-10 experiments, we use five mixture components, while for ImageNet we use three. Full detail in \Cref{sec:details}.

To evaluate the distribution shift tasks, we use the log-likelihood (LL) and expected calibration error \citep[ECE,][]{naeini2015obtaining} metrics.
We also provide results for accuracy, Brier score \cite{brier1950verification}, mean confidence (MMC), and maximum calibration error \citep[MCE,][]{naeini2015obtaining} in \Cref{sec:full_results}. For OOD detection, on top of MMC, we use the area under the ROC curve (AUROC) metric \citep{hendrycks17baseline}.

\begin{figure*}[t]
    \centering

    \begin{tikzpicture}

\definecolor{color0}{rgb}{0.12156862745098,0.466666666666667,0.705882352941177}
\definecolor{color1}{rgb}{1,0.498039215686275,0.0549019607843137}
\definecolor{color2}{rgb}{0.172549019607843,0.627450980392157,0.172549019607843}
\definecolor{color3}{rgb}{0.83921568627451,0.152941176470588,0.156862745098039}

\tikzstyle{every node}=[font=\scriptsize]

\begin{groupplot}[group style={group size=2 by 1, horizontal sep=5em}]
\nextgroupplot[
width=0.5\textwidth,
height=0.25\textwidth,
axis line style={white!80!black},
tick align=outside,
x grid style={white!80!black},
xlabel={Corruption severity},
xmajorgrids,
xmin=-0.25, xmax=5.25,
xtick style={draw=none},
xtick={-1,0,1,2,3,4,5,6},
xticklabels={\(\displaystyle {−1}\),\(\displaystyle {0}\),\(\displaystyle {1}\),\(\displaystyle {2}\),\(\displaystyle {3}\),\(\displaystyle {4}\),\(\displaystyle {5}\),\(\displaystyle {6}\)},
y grid style={white!80!black},
ylabel={Log-Lik. $\uparrow$},
ymajorgrids,
ymin=-5.03516404281968, ymax=-0.598977141545446,
ytick style={draw=none},
]
\addplot [semithick, color0, mark=*, mark size=2, mark options={solid}]
table {%
0 -0.917663623491923
1 -1.61928975715236
2 -2.19399937905161
3 -2.82555094541048
4 -3.73790527582269
5 -4.83351918367085
};
\addplot [semithick, color1, mark=*, mark size=2, mark options={solid}]
table {%
0 -0.808930150985718
1 -1.42790022337261
2 -1.9441099768227
3 -2.50730400374563
4 -3.33581683356034
5 -4.36485056142707
};
\addplot [semithick, color2, mark=*, mark size=2, mark options={solid}]
table {%
0 -0.891342713038127
1 -1.59524768370779
2 -2.13713865472894
3 -2.71732308004279
4 -3.53612897851241
5 -4.48949457967658
};
\addplot [semithick, color3, mark=*, mark size=2, mark options={solid}]
table {%
0 -0.800622000694275
1 -1.42384303733826
2 -1.93324059541
3 -2.48570122683876
4 -3.29260485251176
5 -4.28425396942139
};

\nextgroupplot[
width=0.5\textwidth,
height=0.25\textwidth,
axis line style={white!80!black},
legend cell align={left},
legend style={nodes={scale=0.75, transform shape}, fill opacity=0.8, draw opacity=1, text opacity=1, at={(0,1)}, anchor=north west, draw=white!80!black},
tick align=outside,
x grid style={white!80!black},
xlabel={Corruption severity},
xmajorgrids,
xmin=-0.25, xmax=5.25,
xtick style={draw=none},
xtick={-1,0,1,2,3,4,5,6},
xticklabels={\(\displaystyle {−1}\),\(\displaystyle {0}\),\(\displaystyle {1}\),\(\displaystyle {2}\),\(\displaystyle {3}\),\(\displaystyle {4}\),\(\displaystyle {5}\),\(\displaystyle {6}\)},
y grid style={white!80!black},
ylabel={ECE $\downarrow$},
ymajorgrids,
ymin=0, ymax=0.2,
ytick style={draw=none},
ytick={0,0.05,0.1,0.15,0.2},
yticklabels={\(\displaystyle {0.000}\),\(\displaystyle {0.050}\),\(\displaystyle {0.100}\),\(\displaystyle {0.150}\),\(\displaystyle {0.200}\)}
]
\addplot [semithick, color0, mark=*, mark size=2, mark options={solid}]
table {%
0 0.0797670485021048
1 0.0887991289424419
2 0.107940988618828
3 0.1316507648527
4 0.160502427910152
5 0.189601561699852
};
\addlegendentry{MAP}
\addplot [semithick, color1, mark=*, mark size=2, mark options={solid}]
table {%
0 0.0294303081992164
1 0.0148964797990127
2 0.0138620733707806
3 0.0290961382987634
4 0.0549735677333346
5 0.090258854689796
};
\addlegendentry{DE}
\addplot [semithick, color2, mark=*, mark size=2, mark options={solid}]
table {%
0 0.0258117488596692
1 0.0301520232025362
2 0.0267822142453746
3 0.0247793002845965
4 0.0339328417771561
5 0.0640353842435747
};
\addlegendentry{LLLA}
\addplot [semithick, color3, mark=*, mark size=2, mark options={solid}]
table {%
0 0.026262643433652
1 0.0293350123195269
2 0.0223679729987636
3 0.0103615445397431
4 0.0237575595902097
5 0.05934131139767
};
\addlegendentry{MoLA}
\end{groupplot}

\end{tikzpicture}

    \vspace{-1em}
    \caption{Comparison of all methods on ImageNet-C, in terms of the log-likelihood (\textbf{left}, higher is better) and ECE (\textbf{right}, lower is better) metrics. Lines and dots are averages over all corruption types.}
    \label{fig:imagenet-c}
\end{figure*}
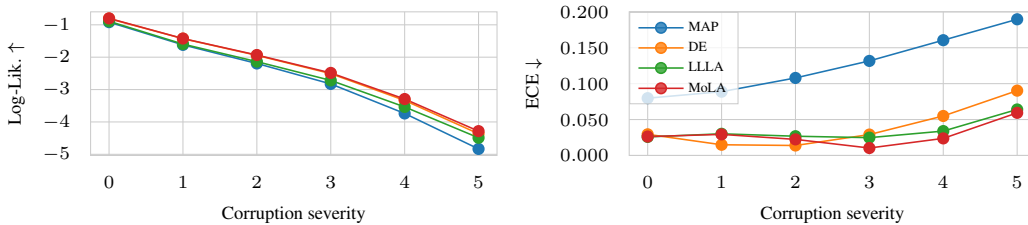

\begin{table}[t]
    \caption{In-distribution: CIFAR-10 $\rightarrow$ OOD. Values are means along with their standard errors
    over five runs with models trained with different random initalizations.
    }
    \label{tab:cifar10_ood}

    \centering
    \fontsize{8}{10}\selectfont

    \begin{sc}
        \begin{tabular}{lccccccc}
            \toprule
            & In-Dist. & \multicolumn{2}{c}{SVHN} & \multicolumn{2}{c}{LSUN} & \multicolumn{2}{c}{CIFAR-100} \\
            Method & MMC & MMC $\downarrow$ & AUROC $\uparrow$ & MMC $\downarrow$ & AUROC $\uparrow$ & MMC $\downarrow$ &  AUROC $\uparrow$ \\
            \midrule
            MAP  &  97.2 $\pm$ 0.0 &  77.5 $\pm$ 2.9 &  91.7 $\pm$ 1.2 &  71.7 $\pm$ 0.8 &  94.3 $\pm$ 0.3 &  79.2 $\pm$ 0.1 &  90.0 $\pm$ 0.1 \\
            DE &  96.1 $\pm$ 0.0 &  62.8 $\pm$ 0.7 &  95.4 $\pm$ 0.2 &  59.2 $\pm$ 0.5 &  96.0 $\pm$ 0.1 &  70.7 $\pm$ 0.1 &  91.3 $\pm$ 0.1 \\
            SWAG & 95.1 $\pm$ 0.4 & 69.3 $\pm$ 4.0 & 91.6 $\pm$ 1.3 &  62.2 $\pm$ 2.3 & 94.0 $\pm$ 0.7 & 73.0 $\pm$ 0.4 & 88.2 $\pm$ 0.5 \\
            MSWAG  &  94.5 $\pm$ 0.2 &  57.0 $\pm$ 1.2 &  95.6 $\pm$ 0.5 &  56.3 $\pm$ 1.0 &  95.6 $\pm$ 0.3 &  65.5 $\pm$ 0.5 &  91.1 $\pm$ 0.1 \\
            LLLA &  94.1 $\pm$ 0.2 &  60.5 $\pm$ 4.0 &  93.6 $\pm$ 1.1 &  54.5 $\pm$ 1.5 &  95.4 $\pm$ 0.3 &  64.8 $\pm$ 0.6 &  90.8 $\pm$ 0.1 \\
            \midrule
            MoLA & 93.8 $\pm$ 0.1 &  \textbf{52.3 $\pm$ 0.7} &  \textbf{96.2 $\pm$ 0.2} &  \textbf{48.3 $\pm$ 0.5} &  \textbf{96.9 $\pm$ 0.1} &  \textbf{61.2 $\pm$ 0.2} &  \textbf{92.0 $\pm$ 0.0} \\
            \bottomrule
        \end{tabular}
    \end{sc}
\end{table}

\subsection{Benchmarks}

\paragraph*{MNIST-R and CIFAR-10-C.}
The results for MNIST-R are in the first row of \Cref{fig:mnist_cifar10}.
MoLA achieves the best results in both LL and ECE metrics, albeit by a small margin to LLLA on LL and to MultiSWAG on ECE. Consistent with \citet{Ovadia2019CanYT}, Deep Ensemble is outperformed by single-DNN Bayesian methods.
On CIFAR-10-C, similar to our observation on MNIST-R, we observe that while all methods' performances degrade as the corruption severity increases, MoLA yields the best results on both metrics across all severity levels (\Cref{fig:mnist_cifar10}, second row). MultiSWAG performs similarly to MoLA for low corruption levels but worsens as the corruption levels increases. While Deep Ensemble improves the vanilla MAP model, it performs worse than MoLA and MultiSWAG. Among the single-model methods, LLLA yields the best results and is only beaten by MoLA and MultiSWAG, as expected.
In contrast to \citet{Ovadia2019CanYT}, a single-model BNN, namely LLLA, outperforms DE. One likely explanation is that in contrast to the last-layer VI method used in \citet{Ovadia2019CanYT}, LLLA does not change the underlying optimization process since it is based on the MAP estimate. This indicates that capturing multiple modes is not the sole reason why DE typically outperforms methods based on only a single model. However, capturing multiple modes further improves performance, as can be seen from the improvement of MoLA compared to LLLA.

\paragraph*{ImageNet-C.}
In \Cref{fig:imagenet-c} we observe that LLLA improves MAP in terms of LL and significantly so in terms of ECE, especially in higher severity levels.
The results for DE and MoLA on LL are similar---though, as expected, both methods perform better than MAP and LLLA.
MoLA achieves better calibration in terms of ECE compared to DE in more challenging scenarios, i.e. at higher severity levels. Nevertheless, DE is more calibrated than LLLA and MoLA for low corruption severity.
One potential explanation for these results is that DE is already well-calibrated in itself and that MoLA is decreasing the confidence, resulting in underconfident predictions for low corruption levels.
These results are reminiscent of the work by \citet{wen2020combining} who observed underconfident predictions as a result of combining DE with a method to improve calibration, in this case, data augmentation.

\paragraph*{OOD detection.}
We present the OOD detection results for CIFAR-10 in \Cref{tab:cifar10_ood}. We observe that MoLA achieves significantly the best results across all OOD test sets and all metrics considered, without sacrificing its in-distribution confidence estimates.\footnote{Significances are established by comparing means and error bars.} Unlike in the previous dataset shift experiments, here we observe that all ensemble methods (Deep Ensemble, MultiSWAG, and MoLA) yield better results than single-model methods (MAP, SWAG, LLLA) in terms of the AUROC metric. This signifies that considering multiple modes is beneficial for discriminating in- against out-of-distribution data. Endowing each of the ensemble members with local uncertainty via MoLA and MultiSWAG further improves this. The MNIST OOD experiment follows a similar trend; we present the full results in \Cref{tab:mnist_ood}.

\subsection{Computational and Memory Cost}
\label{subsec:cost}

We measure the wall-clock time for inference and prediction on the CIFAR-10 test set (cf. \Cref{tab:costs}).
Inference only has to be done once---for LLLA and MoLA it includes the computation and inversion of the K-FAC GGN and the tuning of the prior precision.
We can see that inference is more than an order of magnitude faster for LLLA and MoLA than for SWAG and MultiSWAG respectively.
For prediction, LLLA and MoLA are about as fast as MAP and DE respectively, due to our use of the multi-class probit approximation. Moreover, they only require small additional memory overhead. In contrast, SWAG and MultiSWAG require multiple forward passes which results in slower prediction speed; also, they require multiple snapshots of the model which results in higher memory requirements.

\begin{table}[t]
    \caption{In-distribution: MNIST $\rightarrow$ OOD. Values are means along with their standard errors
    over five random initialization.
    }
    \label{tab:mnist_ood}
    \centering
    \fontsize{8}{10}\selectfont
    \begin{sc}
        \begin{tabular}{lcccccccc}
            \toprule
             & \multicolumn{1}{c}{In-Dist.} & \multicolumn{2}{c}{FMNIST} & \multicolumn{2}{c}{EMNIST} & \multicolumn{2}{c}{KMNIST} \\
             Method & MMC & MMC $\downarrow$ & AUROC $\uparrow$ & MMC $\downarrow$ & AUROC $\uparrow$ & MMC $\downarrow$ &  AUROC $\uparrow$ \\
            \midrule
            MAP &  99.4 $\pm$ 0.0 &  64.1 $\pm$ 0.5 &  99.0 $\pm$ 0.0 &  83.6 $\pm$ 0.3 &  93.7 $\pm$ 0.3 &  77.4 $\pm$ 0.3 &  97.1 $\pm$ 0.1 \\
            DE &  99.2 $\pm$ 0.0 &  55.2 $\pm$ 0.4 &  99.3 $\pm$ 0.0 &  75.8 $\pm$ 0.2 &  95.2 $\pm$ 0.0 &  66.0 $\pm$ 0.3 &  98.4 $\pm$ 0.0 \\
            SWAG &  99.4 $\pm$ 0.0 &  64.6 $\pm$ 0.2 &  99.0 $\pm$ 0.0 &  84.2 $\pm$ 0.2 &  93.6 $\pm$ 0.2 &  78.6 $\pm$ 0.3 &  97.1 $\pm$ 0.1 \\
            MSWAG  &  99.3 $\pm$ 0.0 &  55.6 $\pm$ 0.4 &  99.3 $\pm$ 0.0 &  76.1 $\pm$ 0.3 &  95.2 $\pm$ 0.0 &  66.3 $\pm$ 0.3 &  \textbf{98.5 $\pm$ 0.0} \\
            LLLA &  98.2 $\pm$ 0.0 &  48.2 $\pm$ 0.5 &  99.2 $\pm$ 0.0 &  71.5 $\pm$ 0.3 &  94.4 $\pm$ 0.2 &  63.3 $\pm$ 0.4 &  97.5 $\pm$ 0.1 \\
            \midrule
            MoLA &  98.3 $\pm$ 0.0 &  \textbf{44.0 $\pm$ 0.5} &  \textbf{99.5 $\pm$ 0.0} &  \textbf{67.4 $\pm$ 0.2} &  \textbf{95.7 $\pm$ 0.1} &  \textbf{56.8 $\pm$ 0.4} &  \textbf{98.5 $\pm$ 0.0} \\
            \bottomrule
        \end{tabular}
    \end{sc}
\end{table}

\section{Proofs and Derivations}
\label{sec:proofs}

For completeness, we first derive \Cref{eq:mola_predictive}. Notice that the summation in the definition of $p_\text{MoLA}(\rvf_* | \vx_*, \mathcal{D})$ in \eqref{eq:mola_output_dist} is finite and thus we can safely interchange it with integrals. Now, using \eqref{eq:pred} and \eqref{eq:mola_output_dist}, we have
\begin{equation*}
    \begin{aligned}
        p_\text{{MoLA}}(\vy = \ve_c | \vx_*, \mathcal{D}) &= \int \sigma(\rvf_*)_c \, p_\text{MoLA}(\rvf_* | \vx_*, \mathcal{D}) \,d\rvf_* \\
        &= \int \sigma(\rvf_*)_c \left( \sum_{k=1}^K \pi^{(k)} \, \mathcal{N}(\rvf_* | \vm_*^{(k)}, \mC_*^{(k)}) \right) \,d\rvf_* \\
        &= \sum_{k=1}^K \pi^{(k)} \left( \int \sigma(\rvf_*)_c \, \mathcal{N}(\rvf_* | \vm_*^{(k)}, \mC_*^{(k)}) \,d\rvf_* \right) \\
        &\approx \sum_{k=1}^K \pi^{(k)} \, \sigma(\vz_*^{(k)})_c ,
    \end{aligned}
\end{equation*}
as required, where we have used the linearity of the integral in the third and the multi-class probit approximation \eqref{eq:extended_mackay} in the last step.

\lllmulticlass*
\begin{proof}
    Let $\rvf: \R^{D} \rightarrow \R^C$ be a neural network with activation functions $\text{ReLU}(x) = \max\{0, x\},$ then $\rvf$ is a piecewise affine function \cite{AroraBMM18}. A function $\rvf$ is called piecewise affine if there exists a finite set of polytopes $\{Q_i\}_{i=1}^I$ with $\cup_{i=1}^I Q_i = \R^D$ and $\rvf$ is affine when restricted to any single $Q_i;$ we call $Q_i$ a linear region. The following Lemma, which we adopt without proof from \citet{Hein_2019_CVPR}, says that we can write the neural network as an affine function for data points scaled by a sufficiently large factor.
    \begin{lemma}[\citet{Hein_2019_CVPR}] \label{lemma:linear_regions}
    Let $\{Q_i\}_{i=1}^I$ be the set of linear regions associated with the neural network $\rvf: \R^D \rightarrow \R^C$ with ReLU activations. For any non-zero $\vx \in \R^D$ there exists $\alpha \in \R$ with $\alpha > 0$ and $t \in \{1, \dots, I\}$ such that $\delta \vx \in Q_t$ for all $\delta \geq \alpha.$
    \end{lemma}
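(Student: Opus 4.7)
The plan is to exploit the polyhedral structure of the linear regions and the fact that the ray $\{\delta \vx : \delta > 0\}$ is one-dimensional, so it can only cross finitely many polytope boundaries. First, I would recall that for a ReLU network each linear region $Q_i$ is a (closed) polytope, i.e. the intersection of finitely many closed half-spaces: $Q_i = \{\vy \in \R^D : \vA_i \vy \leq \vb_i\}$ for some $\vA_i \in \R^{m_i \times D}$ and $\vb_i \in \R^{m_i}$, where each row of $\vA_i$ corresponds to a pre-activation hyperplane at some ReLU unit. The total number of such defining half-spaces across all $Q_i$ is finite.

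Next, I would fix a non-zero $\vx$ and analyze, for each $i$ and each row $j \in \{1,\dots,m_i\}$, the one-variable inequality $\delta (\vA_i \vx)_j \leq (\vb_i)_j$ as a function of $\delta > 0$. A simple sign case split handles everything:
\begin{itemize}
    \item if $(\vA_i \vx)_j > 0$, the inequality fails for all $\delta > (\vb_i)_j / (\vA_i \vx)_j$;
    \item if $(\vA_i \vx)_j < 0$, it holds for all $\delta \geq (\vb_i)_j / (\vA_i \vx)_j$;
    \item if $(\vA_i \vx)_j = 0$, it holds for all $\delta$ iff $(\vb_i)_j \geq 0$, and otherwise for no $\delta$.
\end{itemize}
In every case there is a threshold $\alpha_{i,j} \geq 0$ beyond which the truth value of the inequality is constant in $\delta$. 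Taking the maximum over $j$ yields a single $\alpha_i$ such that, for all $\delta \geq \alpha_i$, the statement ``$\delta \vx \in Q_i$'' is either true for all such $\delta$ or false for all such $\delta$.

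Then I would set $\alpha := \max_{i=1,\dots,I} \alpha_i$, which is finite because $I < \infty$. For every $\delta \geq \alpha$, the collection $\mathcal{S}(\delta) := \{i : \delta \vx \in Q_i\}$ does not depend on $\delta$; call this common collection $\mathcal{S}$. The covering hypothesis $\bigcup_{i=1}^I Q_i = \R^D$ forces $\mathcal{S} \neq \emptyset$, because $\delta \vx \in \R^D$ must lie in some $Q_i$. Choose any $t \in \mathcal{S}$; then $\delta \vx \in Q_t$ for all $\delta \geq \alpha$, and by enlarging $\alpha$ slightly we may assume $\alpha > 0$, which is what the lemma asserts.

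The only subtle point in this plan is justifying that each $Q_i$ can indeed be written as a finite intersection of closed half-spaces, which is essentially the definition of piecewise affinity used by \citet{AroraBMM18} and \citet{Hein_2019_CVPR}: a linear region is the preimage of a fixed ReLU activation pattern, which imposes one affine inequality per unit. Once this structural fact is in place, the rest of the argument is the elementary observation that scaling a direction by $\delta \to \infty$ eventually freezes the sign of every affine functional evaluated along the ray, so membership in any given polytope stabilizes.
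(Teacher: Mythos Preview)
Your argument is correct. The paper itself does not prove this lemma at all: it is quoted verbatim from \citet{Hein_2019_CVPR} and explicitly ``adopt[ed] without proof'' as a black box to be used inside the proof of \Cref{lemma:lll_multiclass}. So there is nothing to compare your approach against in this paper.

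That said, your route is essentially the standard one (and matches the spirit of the original proof in \citet{Hein_2019_CVPR}): exploit that each linear region is cut out by finitely many affine inequalities, observe that along the ray $\delta\mapsto\delta\vx$ each such inequality eventually has a constant truth value, take the maximum of the finitely many switching thresholds, and invoke the covering $\bigcup_i Q_i=\R^D$ to guarantee that the stabilized membership set is nonempty. Two minor remarks: (i) the $Q_i$ are in general unbounded polyhedra rather than polytopes in the strict (bounded) sense, but your half-space description and the rest of the argument are unaffected; (ii) your ``enlarge $\alpha$ slightly'' step is harmless but unnecessary, since if the stabilized set $\mathcal{S}$ is already determined at $\alpha=0$ you can simply take any $\alpha>0$.
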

    In the following, we make the dependence of $\vz_i$ on the input $\vx_*$ explicit by writing $\vz_i(\vx_*).$ \Cref{lemma:linear_regions} can be used to derive the following statement. 
    \begin{lemma}[\citet{Kristiadi2020BeingBE}] \label{lemma:increasing}
    For any non-zero $\vx_* \in \R^D$ there exists $\alpha \in \R$ with $\alpha > 0,$ such that $\delta \vx_* \in R$ for all $\delta \geq \alpha.$ Also, we assume that the neural network $\rvf$ has no bias parameters. Then the restriction $|\vz|_R(\delta \vx_*)_i|$ is an increasing function in $\delta$ for all $i \in \{1, \dots, C\}.$ 
    \end{lemma}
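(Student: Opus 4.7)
The plan is to exploit positive homogeneity of the ReLU feature map $\vphi$, combined with Lemma~\ref{lemma:linear_regions}, which guarantees that all the inputs $\delta\vx_*$ with $\delta \geq \alpha$ lie in a single linear region $R$. Because the network has no bias parameters, a short induction on layers shows that $\vphi$ is positively homogeneous of degree one, so $\vphi(\delta\vx_*) = \delta\,\bm{\psi}$ for every $\delta \geq \alpha$, where $\bm{\psi} := \vphi(\alpha\vx_*)/\alpha \in \R^P$ is a constant vector independent of $\delta$. Lemma~\ref{lemma:linear_regions} additionally certifies that this scaling is compatible with the single affine expression $\vphi$ takes on $R$.

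Substituting this scaling into the formulas for the predictive mean and variance recalled in Section~\ref{subsec:ll}, for each $i \in \{1, \dots, C\}$ I obtain
\begin{equation*}
\vm_{*i}(\delta\vx_*) = \vmu_{(i)}^\top \vphi(\delta\vx_*) = \delta\, \vmu_{(i)}^\top \bm{\psi} =: \delta\,a_i,
\end{equation*}
and similarly $\mC_{*ii}(\delta\vx_*) = \bm{\psi}^\top \mSigma_{(i)} \bm{\psi}\,\delta^2 =: \delta^2 b_i$, where $b_i \geq 0$ by positive semidefiniteness of $\mSigma_{(i)}$. Plugging these into the multi-class probit-approximation input from \eqref{eq:extended_mackay}, the restricted coordinate becomes
\begin{equation*}
\vz_R(\delta\vx_*)_i \;=\; \frac{\delta\, a_i}{\sqrt{1 + (\pi/8)\,\delta^2\, b_i}}.
\end{equation*}

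The final step is a single-variable calculus check. Taking absolute values and differentiating by the quotient rule, the cross-terms in $b_i\delta^2$ cancel cleanly, leaving
\begin{equation*}
\frac{d}{d\delta}\bigl|\vz_R(\delta\vx_*)_i\bigr| \;=\; \frac{|a_i|}{\bigl(1 + (\pi/8)\,\delta^2\, b_i\bigr)^{3/2}} \;\geq\; 0,
\end{equation*}
which gives monotonicity on $[\alpha,\infty)$ and proves the lemma. I do not anticipate any genuine obstacle here: once positive homogeneity is invoked, the claim reduces to the elementary fact that $\delta \mapsto \delta/\sqrt{1 + c\,\delta^2}$ is non-decreasing for every $c \geq 0$. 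The only edge case worth explicitly noting is $a_i = 0$, in which $|\vz_R(\delta\vx_*)_i| \equiv 0$ is trivially non-decreasing and the claim holds vacuously.
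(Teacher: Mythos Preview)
Your proof is correct. Note that the paper itself does not actually prove this lemma; it merely states it with attribution to \citet{Kristiadi2020BeingBE}, so there is no in-paper argument to compare against. Your route---global positive homogeneity of the bias-free ReLU feature map $\vphi$ (so that $\vm_{*i}$ scales like $\delta$ and $\mC_{*ii}$ like $\delta^2$), followed by the elementary single-variable check that $\delta \mapsto \delta/\sqrt{1+c\,\delta^2}$ is non-decreasing for every $c \geq 0$---is the natural and standard one, and is essentially how the result is obtained in the cited reference. One minor remark: positive homogeneity of $\vphi$ actually holds globally for all $\delta > 0$, so strictly speaking \Cref{lemma:linear_regions} is only needed to justify the existence of the region $R$ referenced in the statement, not the scaling itself; your phrasing slightly conflates the two, but this does not affect the validity of the argument.
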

    To reiterate, the multi-class probit approximation only uses the diagonal elements $\mC_{*ii}$ of the output covariance matrix $\mC_*$ and hence, without loss of generality, for our analysis we only need to consider the posterior distributions $\mathcal{N}(\vw_{(i)} | \vmu_{(i)}, \mSigma_{(i)})$ over each row $\vw_{(i)}$ of the weight matrix $\mW$, instead of the full posterior over $\mathrm{vec}(\mW)$.
    With this, \Cref{lemma:linear_regions} can be used to derive the following bound.
    \begin{lemma}[\citet{Kristiadi2020BeingBE}] \label{lemma:z_bound}
    For any non-zero $\vx_* \in \R^D$ there exists $\alpha \in \R$ with $\alpha > 0,$ such that $\delta \vx_* \in R$ for all $\delta \geq \alpha.$ We have
    \begin{equation} \label{eq:z_bound}
        \lim_{\delta \rightarrow \infty} |\vz|_R (\delta \vx_*)_i | \leq \frac{||\vmu_{(i)}||_2}{\sqrt{\pi / 8 \, \lambda_\text{min}(\mSigma_{(i)})}} =: b_i,
    \end{equation}
    for all $i \in \{1, \dots, C\}.$
    \end{lemma}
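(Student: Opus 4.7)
The plan is to reduce the problem to a direct analytic calculation by exploiting the positive homogeneity of a bias-free ReLU network inside a fixed linear region, and then to apply two standard inequalities (Cauchy--Schwarz and the Rayleigh quotient bound) to obtain $b_i$.

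First, I would invoke \Cref{lemma:linear_regions} to pick $\alpha>0$ and a linear region $R$ such that $\delta\vx_*\in R$ for every $\delta\geq\alpha$. On $R$ the penultimate-layer feature map $\vphi$ is affine; because the network has no bias parameters and ReLU is positively homogeneous, this affine restriction is in fact \emph{linear}, i.e.\ $\vphi(\delta\vx_*)=\delta\,\vphi(\vx_*)$ for all $\delta\geq\alpha$. Writing $\vphi_*:=\vphi(\vx_*)$, this yields closed forms for the LLLA output mean and variance of the $i$-th logit at the scaled input:
\begin{equation*}
\vm_{*i}(\delta\vx_*)=\delta\,\vmu_{(i)}^\top\vphi_*,\qquad \mC_{*ii}(\delta\vx_*)=\delta^2\,\vphi_*^\top\mSigma_{(i)}\vphi_*.
\end{equation*}

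Next, I would plug these into the multi-class probit expression $\vz_{*i}=\vm_{*i}/\sqrt{1+(\pi/8)\mC_{*ii}}$, giving
\begin{equation*}
|\vz_{*i}(\delta\vx_*)|=\frac{\delta\,|\vmu_{(i)}^\top\vphi_*|}{\sqrt{1+(\pi/8)\,\delta^2\,\vphi_*^\top\mSigma_{(i)}\vphi_*}}.
\end{equation*}
Since $\vx_*\neq\vzero$ and (within $R$) $\vphi_*\neq\vzero$ generically, the denominator grows linearly in $\delta$, and dividing numerator and denominator by $\delta$ and sending $\delta\to\infty$ leaves
\begin{equation*}
\lim_{\delta\to\infty}|\vz_{*i}(\delta\vx_*)|=\frac{|\vmu_{(i)}^\top\vphi_*|}{\sqrt{(\pi/8)\,\vphi_*^\top\mSigma_{(i)}\vphi_*}}.
\end{equation*}

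Finally, I would bound numerator and denominator separately. By Cauchy--Schwarz, $|\vmu_{(i)}^\top\vphi_*|\leq\|\vmu_{(i)}\|_2\,\|\vphi_*\|_2$. Since $\mSigma_{(i)}$ is positive definite, the Rayleigh quotient bound gives $\vphi_*^\top\mSigma_{(i)}\vphi_*\geq\lambda_{\min}(\mSigma_{(i)})\,\|\vphi_*\|_2^{\,2}$. Combining, the $\|\vphi_*\|_2$ factors cancel and one obtains exactly
\begin{equation*}
\lim_{\delta\to\infty}|\vz_{*i}(\delta\vx_*)|\leq\frac{\|\vmu_{(i)}\|_2}{\sqrt{(\pi/8)\,\lambda_{\min}(\mSigma_{(i)})}}=b_i,
\end{equation*}
which is the claim. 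The only subtle point is the edge case $\vphi_*=\vzero$ (the logit is identically zero on $R$, so $|\vz_{*i}|\equiv 0\leq b_i$ trivially); this degenerate case is easy to handle but worth flagging. The main conceptual step is recognizing that the no-bias assumption upgrades the affine restriction on $R$ to a \emph{linear} map, which is exactly what makes the $\delta$-scaling cancel cleanly in the limit.
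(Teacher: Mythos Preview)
Your argument is correct. The paper itself does not prove this lemma; it is quoted verbatim from \citet{Kristiadi2020BeingBE} and used as a black box inside the proof of \Cref{lemma:lll_multiclass}. So there is no in-paper proof to compare against, but your derivation is exactly the standard one behind the cited result: positive homogeneity of the bias-free ReLU feature map gives the $\delta$-scaling of $\vm_{*i}$ and $\mC_{*ii}$, the limit kills the additive $1$ in the denominator, and Cauchy--Schwarz together with the Rayleigh-quotient lower bound on $\vphi_*^\top\mSigma_{(i)}\vphi_*$ produce $b_i$ with the $\|\vphi_*\|_2$ factors cancelling.

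One small remark: you do not actually need the linear region $R$ to get $\vphi(\delta\vx_*)=\delta\,\vphi(\vx_*)$. A bias-free ReLU network is globally positively homogeneous of degree one, so this identity holds for all $\delta>0$, not only for $\delta\geq\alpha$. Invoking \Cref{lemma:linear_regions} is harmless (and matches how the surrounding argument in the paper is organized), but the homogeneity is the real driver, and stating it directly makes the proof slightly cleaner and also removes the need to argue separately that the affine restriction on $R$ has zero intercept. Your handling of the degenerate case $\vphi_*=\vzero$ is appropriate.
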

    Using these results, we can now proof the desired statement. Let $\vx_* \in \R^D$ be arbitrary but non-zero. Then there exists $\alpha \in \R$ with $\alpha > 0,$ such that $\delta \vx_* \in R$ for all $\delta \geq \alpha.$ Using the multi-class probit approximation \eqref{eq:extended_mackay}, we have for all $\delta \geq \alpha$
    \begin{equation*}
        \begin{aligned}
            p_\text{BNN}(\vy = \ve_{c_*} | \delta\vx_*, \mathcal{D}) &\approx \sigma(\vz(\delta \vx_*))_{c_*} \\
            &= \frac{1}{1 + \sum_{i \neq c_*} \exp(\vz(\delta \vx_*)_i - \vz(\delta \vx_*)_{c_*})} \\
            &\leq \frac{1}{1 + \sum_{i \neq c_*} \exp(-(|\vz(\delta \vx_*)_i| + |\vz(\delta \vx_*)_{c_*}|))} \\
            &\leq \lim_{\delta \rightarrow \infty} \frac{1}{1 + \sum_{i \neq c_*} \exp(-(|\vz(\delta \vx_*)_i| + |\vz(\delta \vx_*)_{c_*}|))} \\
            &= \frac{1}{1 + \sum_{i \neq c_*} \exp(- \lim_{\delta \rightarrow \infty} (|\vz(\delta \vx_*)_i| + |\vz(\delta \vx_*)_{c_*}|))} \\
            &\leq \frac{1}{1 + \sum_{i \neq c_*} \exp(- (b_i + b_{c_*}))} ,
        \end{aligned}
    \end{equation*}
    where we have used \Cref{lemma:increasing} in the second and \Cref{lemma:z_bound} in the last inequality. This concludes the proof.
\end{proof}

\molaconfidence*
\begin{proof}
    As in \Cref{lemma:lll_multiclass}, we only need to consider the posterior distributions
    \begin{equation*}
        p_\text{MoLA}(\vw_{(i)} | \mathcal{D}) = \sum_{k=1}^K \pi^{(k)} \mathcal{N}(\vw_{(i)} | \vmu_{(i)}^{(k)}, \mSigma_{(i)}^{(k)})
    \end{equation*}
    over each row $\vw_{(i)}$ of the weight matrix $\mW$, instead of the full MoLA posterior in \eqref{eq:mola_posterior}.
    Using the approximation from \Cref{eq:mola_predictive}, we have
    \begin{equation*}
    \begin{aligned}
    p_{\emph{MoLA}}(\vy = \ve_{c_*} | \delta \vx_*, \mathcal{D}) &\approx \sum_{k=1}^K \pi^{(k)} \, \sigma(\vz^{(k)}(\delta \vx_*))_{c_*} \\
    &\leq \sum_{k=1}^K \frac{\pi^{(k)}}{1 + \sum_{i \neq c_*} \exp(- (b_i^{(k)} + b_{c_*}^{(k)}))},
    \end{aligned}
    \end{equation*}
    where we have applied \Cref{lemma:lll_multiclass} to each mixture component to get the inequality.
\end{proof}

\subsection{Mixture Weights via Model Selection}
\label{sec:marglik_weights}

The goal is to not only infer the neural network's last-layer weight matrix $\mW,$ but also include model selection into the inference problem. Hence, consider the models $\mathcal{M}_k$ of the $K$ components, where the model typically consists of the DNN architecture and hyperparameters. We now want to infer the joint posterior
\begin{equation}
\begin{aligned}
    p(\mW, \mathcal{M}|\mathcal{D}) &= \frac{p(\mathcal{D}|\mW, \mathcal{M}) p(\mW|\mathcal{M}) p(\mathcal{M})}{p(\mathcal{D})} \\
    &= \frac{p(\mW | \mathcal{D}, \mathcal{M}) p(\mathcal{D}|\mathcal{M}) p(\mW|\mathcal{M}) p(\mathcal{M})}{p(\mW | \mathcal{M}) p(\mathcal{D})} \\
    &= p(\mW | \mathcal{D}, \mathcal{M}) p(\mathcal{M} | \mathcal{D}). \\
\end{aligned}
\end{equation}
We can see that $p(\mW | \mathcal{D}, \mathcal{M})$ is our regular posterior over the weights, given a model $\mathcal{M}.$ We consider the model to have a categorical distribution with $K$ categories and impose a uniform prior on them, i.e. $p(\mathcal{M}) = \mathcal{U}(1, \dots, K).$ The posterior probability of the $k$th model $\mathcal{M}_k$ is then given by
\begin{equation}
\begin{aligned}
    p(\mathcal{M}_k | \mathcal{D}) &= \frac{p(\mathcal{D} | \mathcal{M}_k) p(\mathcal{M}_k)}{p(\mathcal{D})} \\ 
    &= \frac{p(\mathcal{D} | \mathcal{M}_k) p(\mathcal{M}_k)}{\sum_{k^{'}=1}^K p(\mathcal{D} | \mathcal{M}_{k^{'}}) p(\mathcal{M}_{k^{'}})} \\ 
    &= \frac{p(\mathcal{D} | \mathcal{M}_k) \frac{1}{K}}{\sum_{k^{'}=1}^K p(\mathcal{D} | \mathcal{M}_{k^{'}}) \frac{1}{K}} \\ 
    &= \frac{p(\mathcal{D} | \mathcal{M}_k)}{\sum_{k^{'}=1}^K p(\mathcal{D} | \mathcal{M}_{k^{'}})},
\end{aligned}
\end{equation}
where we can recognize $p(\mathcal{D} | \mathcal{M}_k)$ as the marginal likelihood corresponding to the posterior over the weights $\mW$ given the model $\mathcal{M}_k.$
Using the posterior over weights and models, the predictive distribution under the Laplace approximation becomes
\begin{equation}
\begin{aligned}
    p(\vy = \ve_c | \vx_{*}, \mathcal{D}) &= \int \sum_{k=1}^K p(\vy = \ve_c |\vx_{*}, \mW, \mathcal{M}_k) p(\mW | \mathcal{D}, \mathcal{M}_k) p(\mathcal{M}_k | \mathcal{D}) \,d \mW \\
    &= \sum_{k=1}^K \underbrace{p(\mathcal{M}_k | \mathcal{D})}_{=: \pi^{(k)}} \int p(\vy = \ve_c |\vx_{*}, \mW, \mathcal{M}_k) p(\mW | \mathcal{D}, \mathcal{M}_k) \,d \mW \\
    &= \sum_{k=1}^K \pi^{(k)} \int \sigma(\rvf_*)_c \, p(\rvf_*|\mathcal{D}, \mathcal{M}_k) \, d\rvf_* \\
    &= p_\text{{MoLA}}(\vy = \ve_c | \vx_*, \mathcal{D}),
\end{aligned}
\end{equation}
where we have used the predictive distribution of the LLLA from \Cref{eq:pred} in the third step. The mixture weights are therefore the normalized marginal likelihoods of the $K$ different models.
Since we apply the Laplace approximation to each DNN, each model's weight posterior has Gaussian form, and therefore, the marginal likelihood of each model $\mathcal{M}_k$ can be estimated in closed form.

\section{Experimental Details}
\label{sec:details}

\subsection{Algorithm}
\label{sec:algorithm}

We present the algorithm for inference and prediction with MoLA using LLLA and a K-FAC approximation of the Hessian in \Cref{alg:mola}. Note that the inference function only needs to be called once. The algorithm for LLLA with K-FAC is adopted from \citet{Kristiadi2020BeingBE}.

\begin{algorithm}[tb]
   \caption{MoLA inference and prediction functions with LLLA and K-FAC approximation of the Hessian.}
   \label{alg:mola}
\begin{algorithmic}
   \FUNCTION{Inference($\{ \rvf^{(k)} \}_{k=1}^K$, $\mathcal{D}_{\text{train}}, \mathcal{D}_{\text{val}}$)}
   \STATE {\bfseries Input:} Set of $K$ pre-trained neural networks $\rvf(\cdot)^{(k)} = \mW_{\text{MAP}}^{(k)} \vphi(\cdot)^{(k)}$, training dataset $\mathcal{D}_{\text{train}},$ validation dataset $\mathcal{D}_{\text{val}},$ mini-batch size $n,$ running average parameter $\beta$
   \STATE Initialize $\mathcal{A} = \emptyset$ and $\mathcal{B} = \emptyset$.
   \FOR{$k = 1$ {\bfseries to} $K$}
   \STATE Initialize $\mA^{(k)} = \mathbf{0} \in \R^{C \times C}$ and $\mB^{(k)} = \mathbf{0} \in \R^{P \times P}$.
   \FOR{$\mX, \mY$ {\bfseries in} $\text{sampleMiniBatch}(\mathcal{D}_{\text{train}}, n)$}
   \STATE $\hat{\mA}^{(k)}, \hat{\mB}^{(k)} = \text{computeK-FAC}(\mathcal{L}(\rvf(\mX)^{(k)}, \mY), \mW_{\text{MAP}})$
   \STATE $\mA^{(k)} = \beta \mA^{(k)} + (1-\beta) \hat{\mA}^{(k)}$
   \STATE $\mB^{(k)} = \beta \mB^{(k)} + (1-\beta) \hat{\mB}^{(k)}$
   \ENDFOR
   \STATE $\mathcal{A} = \mathcal{A} \cup \{\mA^{(k)}\}$
   \STATE $\mathcal{B} = \mathcal{B} \cup \{\mB^{(k)}\}$
   \ENDFOR
   \STATE $\boldsymbol{\pi} = \text{chooseMixtureWeights}(\cdot)$
   \STATE $\lambda = \text{tunePriorPrecision}(\{ \rvf^{(k)} \}_{k=1}^K, \mathcal{A}, \mathcal{B}, \boldsymbol{\pi}, \mathcal{D}_{\text{val}})$
   \STATE Initialize $\mathcal{U} = \emptyset$ and $\mathcal{V} = \emptyset$.
   \FOR{$k = 1$ {\bfseries to} $K$}
   \STATE $\mU^{(k)} = (\sqrt{|\mathcal{D}_{\text{train}}|} \mA^{(k)} + \sqrt{\lambda} \mathbf{I})^{-1}$
   \STATE $\mV^{(k)} = (\sqrt{|\mathcal{D}_{\text{train}}|} \mB^{(k)} + \sqrt{\lambda} \mathbf{I})^{-1}$
   \STATE $\mathcal{U} = \mathcal{U} \cup \{\mU^{(k)}\}$
   \STATE $\mathcal{V} = \mathcal{V} \cup \{\mV^{(k)}\}$
   \ENDFOR
   \STATE {\bfseries Output:} $\mathcal{U}, \mathcal{V}, \boldsymbol{\pi}$
   \ENDFUNCTION
   \\
   \FUNCTION{Prediction($\{ \rvf^{(k)} \}_{k=1}^K, \mathcal{U}, \mathcal{V}, \boldsymbol{\pi}, \mathcal{D}_{\text{test}}$)}
   \STATE {\bfseries Input:} Set of $K$ pre-trained neural networks $\rvf(\cdot)^{(k)} = \mW_{\text{MAP}}^{(k)} \vphi(\cdot)^{(k)}$, sets $\mathcal{U}$ and $\mathcal{V}$ with the corresponding Kronecker factored covariances, mixture weights $\boldsymbol{\pi} = \{\pi^{(k)}\}_{k=1}^K,$ test dataset $\mathcal{D}_{\text{test}}$
   \STATE Initialize $\mathcal{Y} = \emptyset.$
   \FOR{$\vx_*$ {\bfseries in} $\mathcal{D}_{\text{test}}$}
   \STATE Initialize $\vy = \mathbf{0} \in \R^{C}.$
   \FOR{$k = 1$ {\bfseries to} $K$}
   \STATE $\vm_*^{(k)} = \mW_{\text{MAP}}^{(k)} \vphi_*^{(k)}$
   \STATE $\mC_*^{(k)} = \langle \vphi_*^{(k)}, \mV^{(k)} \vphi_*^{(k)} \rangle \, \mU^{(k)}$
   \STATE $\vz_*^{(k)} = \vm_*^{(k)} / \sqrt{1 + (\pi/8) \, \text{diag}(\mC_*^{(k)})}$
   \STATE $\vy = \vy + \pi^{(k)} \sigmoid(\vz_*^{(k)})$
   \ENDFOR
   \STATE $\mathcal{Y} = \mathcal{Y} \cup \{\vy\}$
   \ENDFOR
   \STATE {\bfseries Output:} $\mathcal{Y}$
   \ENDFUNCTION
\end{algorithmic}
\end{algorithm}

\subsection{Datasets}
\label{sec:data}

The rotated MNIST benchmark uses the standard MNIST dataset \cite{lecun1989} which consists of black and white images of handwritten digits, labeled with 10 classes. The images are roatated by increasing angle up to 180 degrees.
The corrupted CIFAR-10 benchmark consists of the the standard CIFAR-10 dataset and 5 corruption levels with 16 different corruptions types each. Examples for the corruption types are Gaussian noise and changed brightness. The corrupted ImageNet benchmark consists of the standard ImageNet2012 dataset and 5 corruption levels with 19 corruption types each. The types of corruptions are the same as for corrupted CIFAR-10. Both corrupted datasets were proposed in \citet{hendrycks2018benchmarking}; see Figure 1 in their paper for example images of the different corruption types.

As OOD datasets for the OOD experiment we use SVHN, LSUN-Classroom, and CIFAR-100. All datasets are available through the PyTorch \texttt{torchvision} package \cite{pytorch}.

\subsection{Training}
\label{sec:training}

For MNIST we use a five-layer LeNet \cite{lecun1989} and Adam \cite{Kingma2015adam} and initial learning rate of 1e-3. For CIFAR-10 we use a Wide ResNet (WRN) 16-4 \cite{Zagoruyko2016WRN} with dropout rate of 0.3 and stochastic gradient descent (SGD) with Nesterov momentum of 0.9 and inital learning rate of 0.1; we also use standard data augmentation, i.e. random cropping and flipping. For both, we train for 100 epochs with mini-batch size of 128, use a cosine learning rate schedule \cite{LoshchilovH17}, and weight decay with factor 5e-4. For ImageNet we use a WRN 50-2 and follow the PyTorch example training script
\footnote{\url{https://github.com/pytorch/examples/tree/master/imagenet}}: we optimize for 90 epochs with SGD with initial learning rate of 0.1 and decrease it by 90\% at epoch 30 and epoch 60; we also use weight decay of 1e-4 and random cropping and flipping as data augmentation.
The LeNet, WRN 16-14, and WRN 50-2 achieve 99.2\%, 94.8\%, and 77.6\% accuracy, respectively.

\subsection{Hyperparameter Tuning}
\label{sec:tuning}

We only tune one hyperparameter for LLLA and MoLA, namely the prior precision. For MoLA we tune a single scalar prior precision for all components. We want to choose it high enough to avoid underconfidence on in-distribution data while keeping it low enough that the behaviour does not becomes too similar to MAP or DE. Hence, we perform a simple grid search on a validation set of size 2000, going over a range of up to 100 values for the prior precision. The validation set is a randomly sampled subset of the test dataset. We start with a prior precision of 1e-4 and go up to at most 1e3 and check if the mean confidence and Brier score on the validation set fulfill a threshold. We then choose the first, i.e. smallest, prior precision which fulfills these thresholds. We set the thresholds for the mean confidence to be slightly below the accuracy of MAP on the validation set. For MNIST the threshold is 0.98 and for CIFAR-10 it is 0.94. While we have also set a threshold on the Brier score for our experiments, just the threshold on the average confidence is sufficient by itself. Note that we could also choose some other method to tune the prior precision; however, in practice our heuristic method seems sufficient. 

For SWAG and MultiSWAG, we follow \citet{maddox2019simple} and run stochastic gradient descent with a constant learning rate on the pre-trained models to collect one model snapshot per epoch, for a total of 40 snapshots. At test time, we then make predictions by using 30 Monte Carlo samples from the posterior distribution; we correct the batch normalization statistics of each sample as described in \citet{maddox2019simple}.
To tune the constant learning rate, we used the same approach as for tuning the prior precision of LLLA and MoLA described above, combining a grid search with a threshold on the mean confidence.
For MNIST, we defined the grid to be the set $\{$ 1e-1, 5e-2, 1e-2, 5e-3, 1e-3 $\}$, yielding an optimal value of 1e-2.
For CIFAR-10, searching over the same grid suggested that the optimal value lies between 5e-3 and 1e-3; another, finer-grained grid search over the set $\{$ 5e-3, 4e-3, 3e-3, 2e-3, 1e-3 $\}$ then revealed the best value to be 2e-3.

\subsection{Implementation Details of MIMO and MIMO-MoLA}

For MIMO and MIMO-MoLA, we only use $K=3$ ensemble components  due to the limited capacity of the model. We use input repetition $\rho=0$ and batch repetition of $4;$ see \citet{Havasi2021mimo} for more details on these hyperparameters. As for MoLA, we only tune a single scalar prior precision for MIMO-MoLA, using the heuristic described in \Cref{sec:tuning}. In contrast to LLLA and MoLA, we use the full GGN of each head of the last layer for MIMO-MoLA, in favor of implementational convenience. While the results on corrupted CIFAR-10 are promising, a more thorough study of MIMO-MoLA is necessary.

\subsection{Computing Resources}
\label{sec:compute}

All experiments were conducted on a cluster with multiple NVIDIA RTX 2080Ti and Tesla V100 GPUs.

\section{Additional Results}
\label{sec:add_results}

\subsection{Varying the Number of Mixture Components}
\label{sec:nr_comps}

To check the behavior of MoLA with increasing number of mixture components, we compare DE and MoLA with one to ten mixture components on CIFAR-10-C. For each number of mixture components, we average each metric over all corruption levels and types.
We observe that MoLA follows a similar power-law-like behaviour as DE with increasing number of mixture components; see \citet{lobacheva2020power} for a detailed investigation in these power laws in DEs.
Interestingly, just one component of MoLA, i.e. LLLA, performs better than DE with ten components regarding log-likelihood and ECE; it also achieves lower mean confidence. The only metric where DE performs as well as MoLA is accuracy; DE is even better, albeit by a very small margin (note the scale of the y-axis).

\section{Full Results}
\label{sec:full_results}

\subsection{Variations of LLLA}
\label{sec:llla_full}

To explore the effect of the K-FAC approximation of the Hessian (\Cref{subsec:practical}) and the multi-class probit approximation (MPA) to the predictive distribution on performance, we compare different variations of LLLA on CIFAR-10-C. We test all six combinations of diagonal/K-FAC approximation/full Hessian and the MC integral \eqref{eq:mc_integral}/MPA \eqref{eq:extended_mackay}. We use 100 MC samples for the MC integral. The prior precision is tuned by choosing the smallest from a grid with $100$ steps which leads to the mean confidence on the validation set being larger than $0.94.$
Regarding accuracy and Brier score, the least accurate approximation, i.e. the diagonal/MPA combination, performs worst.
The most accurate approximation, i.e. the full/MC combination performs best regarding log-likelihood (by a small margin and with overlapping error bars), ECE (tied with K-FAC/MC), and Brier score. 
While we prefer the combination of K-FAC, which is even feasible for models with a large last-layer, e.g. WRN 50-2 for ImageNet, and the MPA, which enables predictions with negligible additional cost compared to a regular forward pass, for our experiments, it is of course feasible to use a K-FAC/full covariance and a MC approximation to the predictive distribution. Using one of these two combinations might further increase performance.

\subsection{Rotated MNST}
\label{sec:r_mnist_full}

On MNIST-R, MoLA is only marginally better than the next best methods regarding log-likelihood, Brier score, and ECE. In terms of accuracy, all methods are very similar, with MultiSWAG being the best by small margin. MoLA achieves the lowest mean confidence. There is no clear trend regarding MCE.

\subsection{Corrupted CIFAR-10}
\label{sec:c_cifar_full}

On CIFAR-10-C, MoLA performs best regarding log-likelihood, Brier score, and ECE. In terms of accuracy it is similar to MultiSWAG, altough slightly better. There is no clear trend regarding MCE. Interestingly, LLLA performs better than all methods besides MoLA regarding log-likelihood, despite only using a single model; moreover, it outperforms DE regarding ECE.

\subsection{Corrupted ImageNet}
\label{sec:c_imagenet_full}

On ImageNet-C, MoLA performs more or less the same as DE regarding log-likelihood, accuracy, and Brier score. MoLA performs worse than DE on ECE for low corruption severity but better for high corruption severity. DE is alreay well calibrated for low corrutpion severity and MoLA becomes slightly underconfident. LLLA outperforms MAP on log-likelihood and Brier score, but only for higher corruption severity. On ECE, LLLA performs much better than MAP and almost the same as DE and MoLA (even better than DE for high corruption severity). In terms of accuracy, LLLA performs about the same as MAP. Regarding MCE, MAP is outperformed by all other methods.

\begin{figure*}[ht]
\vskip 0.2in
\begin{center}
\centerline{\includegraphics[width=\textwidth]{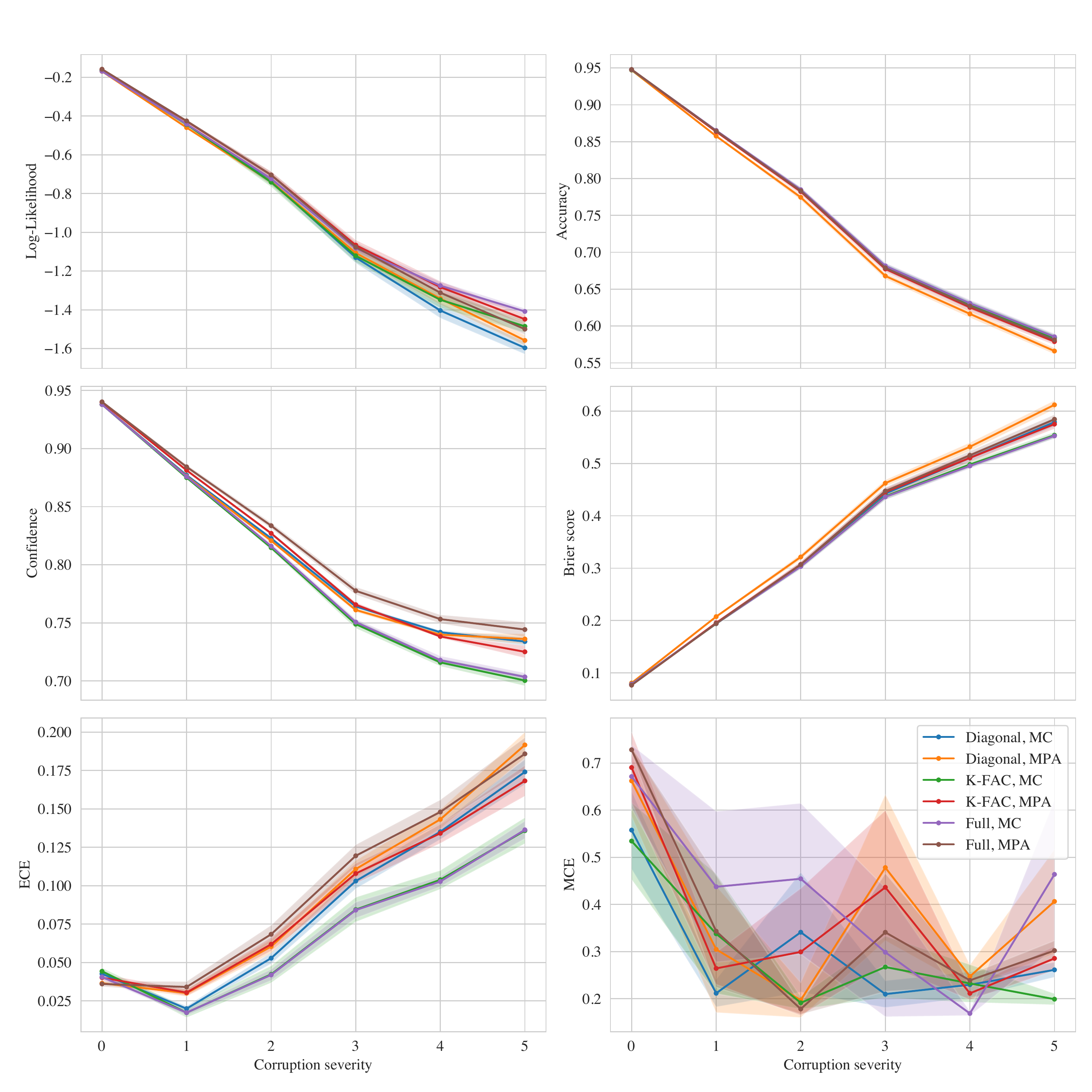}}
\caption{Comparison of different variations of LLLA on the corrupted CIFAR-10 dataset. Lines and dots are means over all corruption types at a particular severity level while shades represent standard errors of five runs. MC stands for ``Monte Carlo'' and MPA for ``multi-class probit approximation''.}
\label{llla_full}
\end{center}
\vskip -0.2in
\end{figure*}

\begin{figure*}[ht]
\vskip 0.2in
\begin{center}
\centerline{\includegraphics[width=\textwidth]{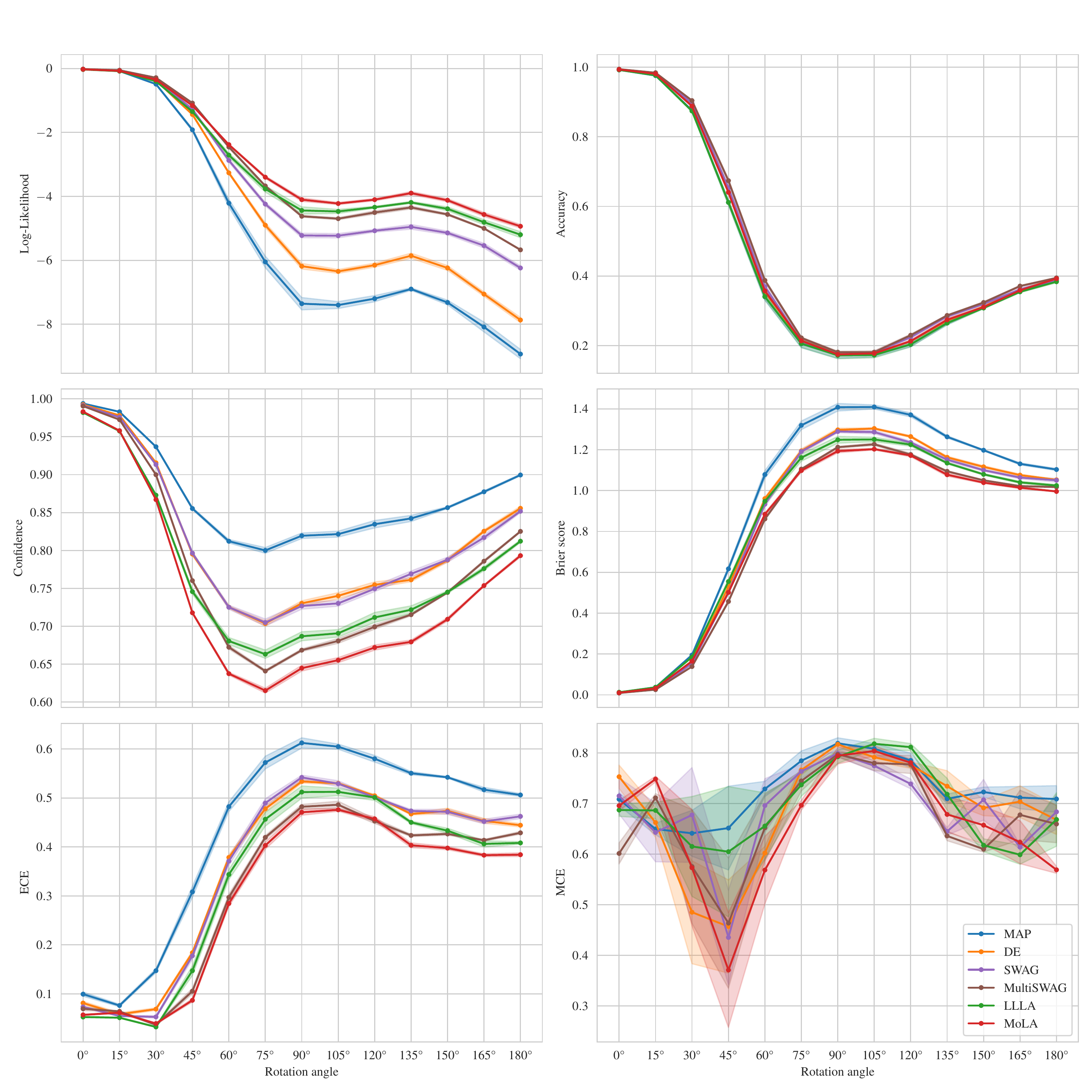}}
\caption{Comparison of all methods on rotated MNIST. Metrics are averaged over all corruption types for each corruption severity. Lines and dots represent means while shades represent standard errors of five runs with models trained with different random initalizations.}
\label{r_mnist_full}
\end{center}
\vskip -0.2in
\end{figure*}

\begin{figure*}[ht]
\vskip 0.2in
\begin{center}
\centerline{\includegraphics[width=\textwidth]{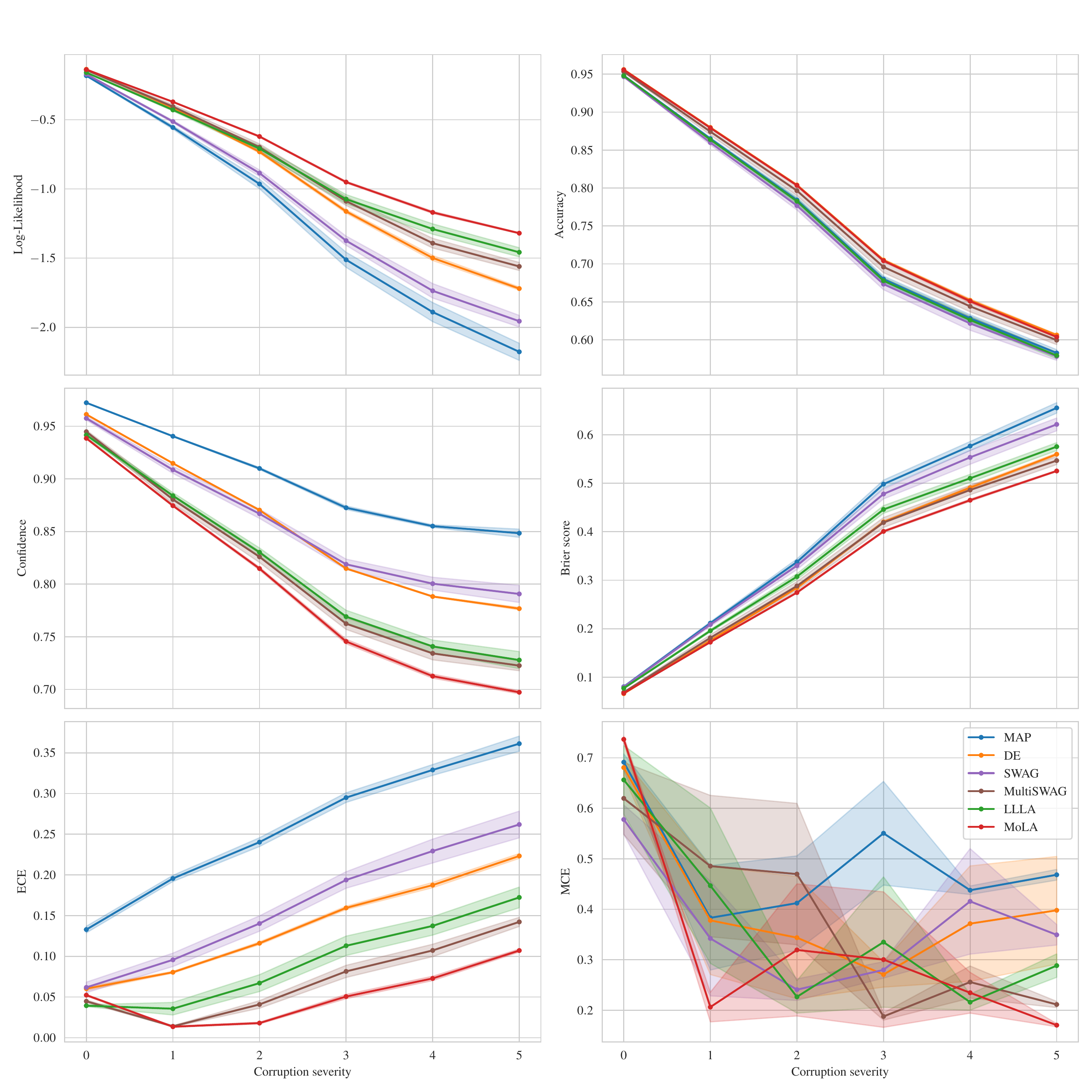}}
\caption{Comparison of all methods on corrupted CIFAR-10. Metrics are averaged over all corruption types for each corruption severity. Lines and dots represent means while shades represent standard errors of five runs with models trained with different random initalizations.}
\label{c_cifar10_full}
\end{center}
\vskip -0.2in
\end{figure*}

\begin{figure*}[ht]
\vskip 0.2in
\begin{center}
\centerline{\includegraphics[width=\textwidth]{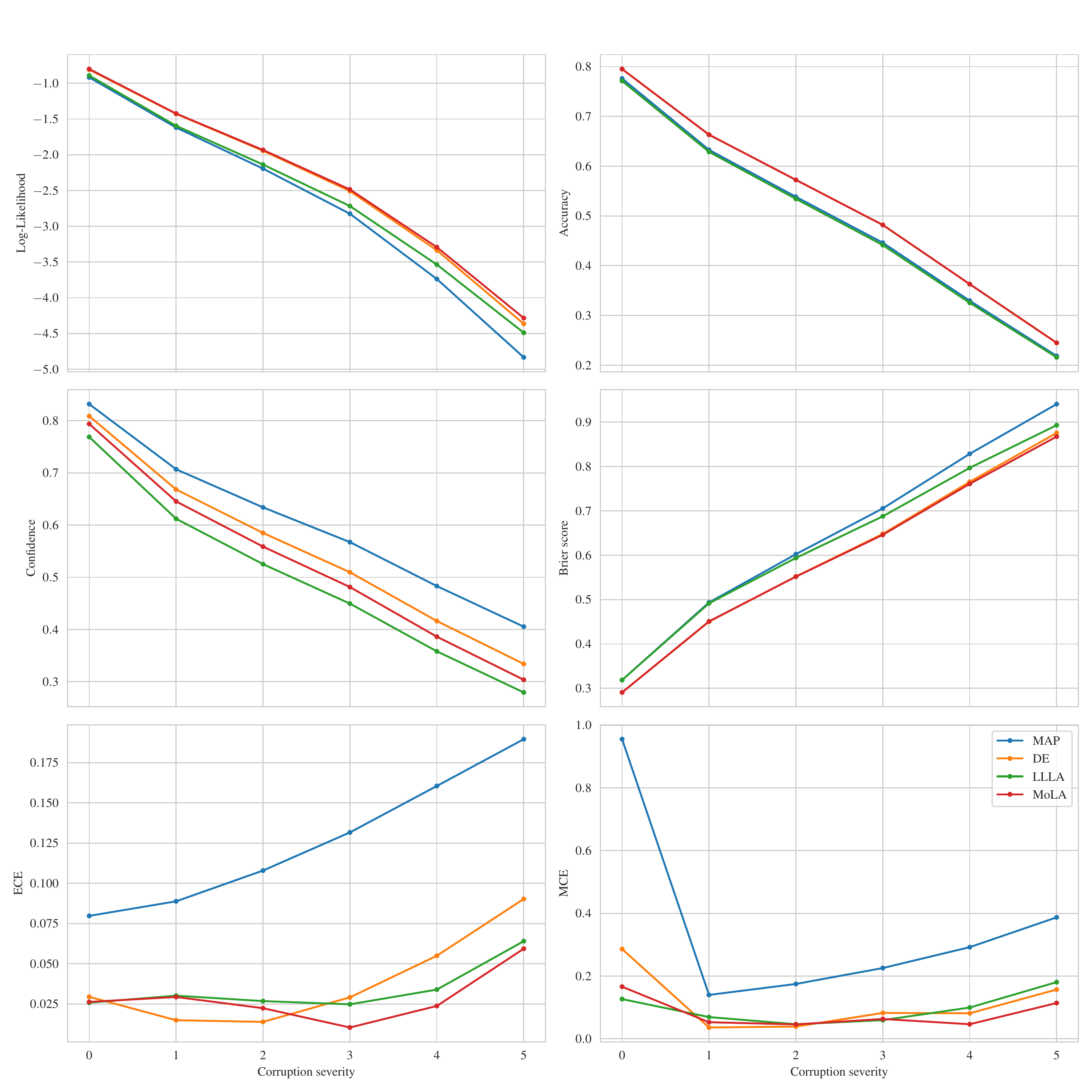}}
\caption{Comparison of MAP, DE, LLLA, and MoLA on corrupted ImageNet. Lines and dots represent the mean over all corruption types for each corruption severity.}
\label{c_imagenet_full}
\end{center}
\vskip -0.2in
\end{figure*}

\begin{figure*}[ht]
\vskip 0.2in
\begin{center}
\centerline{\includegraphics[width=\textwidth]{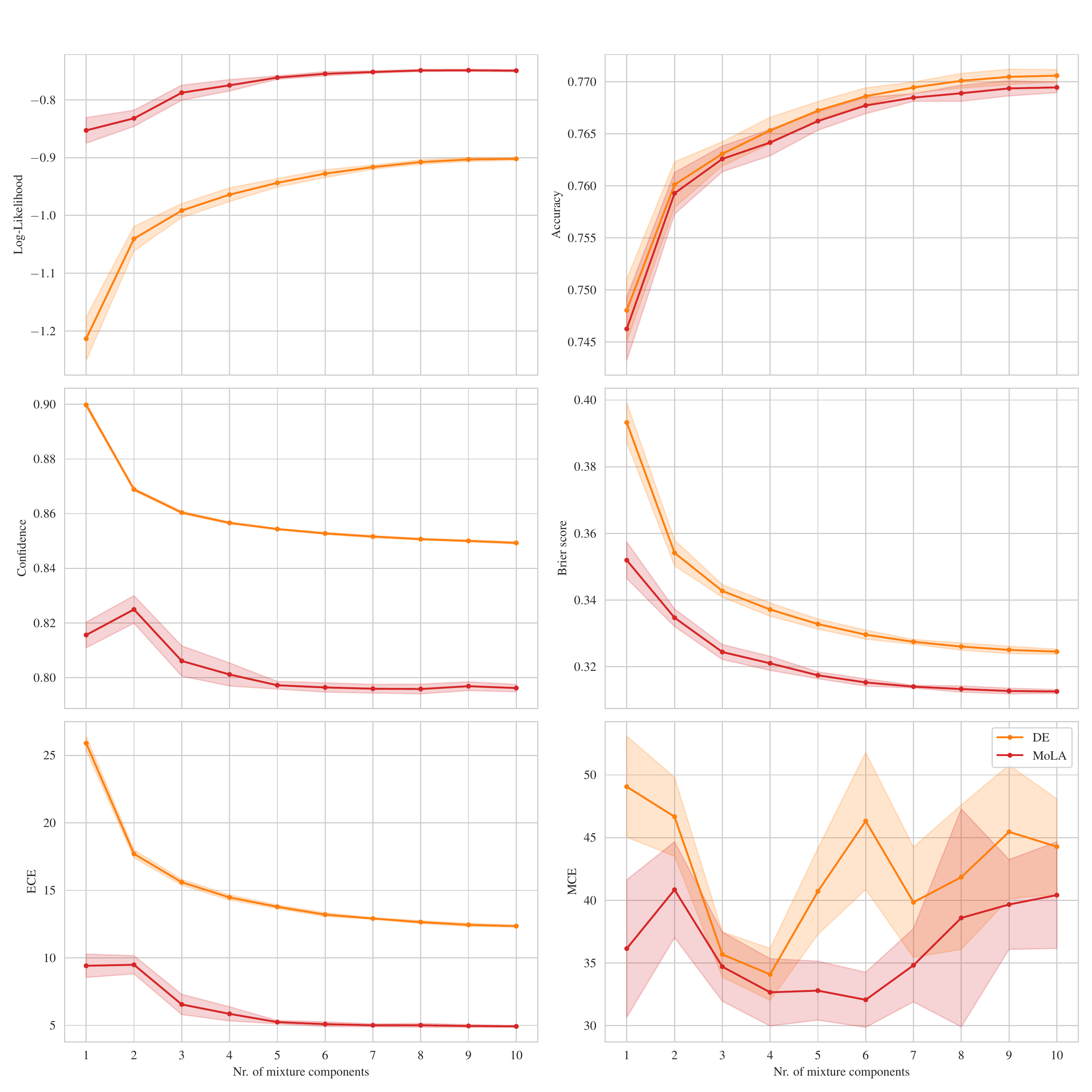}}
\caption{Comparison of different numbers of mixture components on corrupted CIFAR-10. Metrics are averaged over all corruption severities and types. Lines and dots represent means while shades represent standard error of five runs with models trained with different random initalizations.}
\label{nr_comps}
\end{center}
\vskip -0.2in
\end{figure*}

\end{document}